\runningtitle{A Practical Algorithm for Multiplayer Bandits when Arm Means Vary Among Players}
\runningauthor{Boursier, Kaufmann, Mehrabian, and Perchet}
\newcommand{\NameSecond}{M-ETC-Elim}
\newcommand{\eps}{\varepsilon}
\renewcommand{\epsilon}{\varepsilon}
\newcommand{\ex}[1]{ \bE\left[ #1 \right]}
\newcommand{\pr}[1]{ \bP \left ( #1 \right) }
\renewcommand{\tilde}[1]{\widetilde{#1}}
\renewcommand{\hat}[1]{\widehat{#1}}
\begin{document}

\twocolumn[
\aistatstitle{A Practical Algorithm for Multiplayer Bandits \\when Arm Means Vary Among Players}

\aistatsauthor{Etienne Boursier \And Emilie Kaufmann \And Abbas Mehrabian \And Vianney Perchet}

\newcommand{\authorsize}{\tiny}
\newcommand{\emailsize}{\bf \scriptsize}

\aistatsaddress{\scriptsize \parbox{5cm}{\centering Université Paris-Saclay, ENS Paris-Saclay \\ CNRS, Centre Borelli, Cachan, France} \\ \emailsize\texttt{eboursie@ens-paris-saclay.fr} \And  \scriptsize \parbox{5cm}{\centering Univ. Lille, CNRS, Inria SequeL, \\ UMR 9189 - CRIStAL, Lille, France} \\ \emailsize \texttt{emilie.kaufmann@univ-lille.fr} \And \scriptsize
	McGill University, Montr\'eal, Canada \\\emailsize \texttt{abbas.mehrabian@gmail.com} \And\scriptsize \parbox{5cm}{\centering CREST, ENSAE Paris, Palaiseau, France \\ Criteo AI Lab, Paris, France} \\\emailsize \texttt{vianney.perchet@normalesup.org}} 
]

\begin{abstract}
We study a multiplayer stochastic multi-armed bandit problem
in which players cannot communicate,
and if two or more players pull the same arm, a collision occurs and the involved players receive zero reward. 
We consider the challenging \emph{heterogeneous} setting, in which different arms may have different means for different players, and propose a new and efficient algorithm that combines the idea of leveraging forced collisions for implicit communication and that of performing matching eliminations. 
We present a finite-time analysis of our algorithm, 
giving the first sublinear minimax regret bound for this problem,
and prove that if the optimal assignment of players to arms is unique, our algorithm attains the optimal $O(\ln(T))$ regret, solving an open question raised at NeurIPS 2018 by \cite{got}.
\end{abstract}

\section{Introduction}

Stochastic multi-armed bandit models have been studied extensively  as they capture many sequential decision-making problems of practical interest. In the simplest setup, an agent repeatedly chooses among several actions (referred to as ``arms'') in each round of a game. To each action $i$ is associated a real-valued parameter $\mu_i$. Whenever the player performs the $i$th action (``pulls arm $i$''), she receives a random reward with mean $\mu_i$. The player's objective is to maximize the sum of rewards obtained during the game. If she knew the means associated with the actions before starting the game, 
she would play an action with the largest mean reward during all rounds.
The problem is to design a strategy for the player to maximize her reward in the setting where the means are unknown. The \emph{regret} of the strategy is the difference between the accumulated rewards in the two scenarios.

To minimize the regret, the player is faced with an exploration/exploitation trade-off as she should try (explore) all actions to estimate their means accurately enough but she may want to exploit the action that looks {\em probably} best given her current information. We refer the reader to~\citep{Bubeck:Survey12,torcsaba_book} for surveys on this problem. Multi-armed bandit (MAB) has been first studied as a simple model for sequential clinical trials~\citep{Thompson33,Robbins52Freq} but has also found many modern applications to online content optimization, such as the design of recommender systems~\citep{Li10contextual}. Recently, MAB algorithms have also been investigated for cognitive radios   \citep{Jouini09,anandkumar2011distributed}. In this context, arms model the available radio channels on which radio devices can communicate, and the reward associated with each arm is either a binary indicator of the success of a communication on that channel or some measure of its quality. 

The   applications to cognitive radios have motivated the \emph{multiplayer} bandit problem, in which several agents (devices) play on the same bandit (communicate using the same channels). If two or more agents pull the same arm, a \emph{collision} occurs and all agents pulling that arm receive zero reward. Without communicating, each agent must adopt a strategy aimed at maximizing the global reward obtained by all agents---so, we are considering a cooperative scenario rather than a competitive one. 
While most previous work on this problem focuses on the case in which the means of the arms are identical across players (the homogeneous variant), in this paper we study  the more challenging heterogeneous variant, in which each user may have a different utility for each arm: if player $m$ selects arm $k$, she receives a reward with mean $\mu_{k}^m$. This variant is more realistic for applications to cognitive radios, as the quality of each channel may vary from one user (device) to another, depending for instance on its configuration and location. 
%


More precisely, we study the model introduced by~\cite{got}, which has two main characteristics: first, each arm has a possibly different mean for each player; second, we are in a fully distributed setting with  no communication between players. Let $T$ denote the time horizon.
\citet{got} proposed an algorithm with regret bounded by $O((\ln T)^{2+\kappa})$ (for any constant $\kappa$), 
proved a lower bound of $\Omega(\ln T)$ \emph{for any algorithm}, and asked  if there is an algorithm matching this lower bound. In this paper, we propose a new algorithm for this model, \NameSecond{}, which depends on a hyperparameter $c$, and we upper bound its regret by $O(\ln(T)^{1+1/c})$ for any $c>1$. 
We also bound its worst-case regret by $O(\sqrt{T \ln T})$, which is the first sublinear minimax bound for this problem.
Moreover, if the optimal assignment of the players to the arms is unique, we prove that instantiating \NameSecond{} with $c=1$ yields regret at most $O(\ln(T))$, which is optimal and answers affirmatively the open question mentioned above in this particular case.\footnote{In practice, the optimal assignment may not be unique, but the players may circumvent this by adding a tiny random bias to their observations, independent of other players, and this will make the optimal assignment unique with high probability.} 
We present a non-asymptotic regret analysis of \NameSecond{} leading to nearly optimal regret upper bounds, and also demonstrate the empirical efficiency of this new algorithm via simulations.    

\paragraph{Outline} 
In Section~\ref{sec:Setup}, we formally introduce the heterogeneous multiplayer multi-armed bandit model and present our contributions. These results are put in perspective by comparison with the literature given in Section~\ref{sec:related}. We describe the \NameSecond{} algorithm in Section~\ref{sec:init} and upper bound its regret in Section~\ref{sec:ProofMain}. Finally, we report in Section~\ref{sec:experiments} results from an experimental study demonstrating the competitive practical performance of \NameSecond{}.

\section{Model and Contributions}\label{sec:Setup}
\label{sec:setup}

We study a multi-armed bandit model where $M$ players compete over $K$ arms, with $M \leq K$. We denote by $\mu_k^m$ the mean reward (or expected utility) of arm $k$ for player $m$. In each round $t = 1,2, \dots,T$, player $m$ selects arm $A^m(t)$ and receives a reward
\[R^m(t) = Y^m_{A^m(t),t} \left(1-\ind\left(\cC_{A^m(t),t}\right)\right),\]
where $(Y^m_{k,t})_{t=1}^{\infty}$ is an i.i.d.\ sequence with mean $\mu_{k}^m$ taking values in $[0,1]$,  $\cC_{k,t}$ is the event that at least two players have chosen arm $k$ in round $t$ (i.e., a collision occurs), and $\ind\left(\cC_{k,t}\right)$ is the corresponding indicator function. In the cognitive radio context, $Y^m_{k,t}$ is the quality of channel $k$ for player $m$ if she were to use this channel in isolation in round $t$, but her actual reward is zero if a collision occurs.

We assume that each player $m$ in  each round $t$ observes her reward $R^m(t)$ and the collision indicator $\ind\left(\cC_{A^m(t),t}\right)$. Note that in the special case in which the reward distributions satisfy $\bP(Y_{k,t}^m = 0)=0$ (e.g., if the corresponding distribution is continuous), the collision indicator $\ind\left(\cC_{A^m(t),t}\right)$ can  be reconstructed from the observation of $R^m(t)$. 
The decision of player $m$ at round $t$ can depend only on her past observations; that is, $A^m(t)$ is $\cF^m_{t-1}$ measurable, where $\cF^m_t = \sigma(A^m(1),R^m(1), \ind\!\left(\cC_{A^m(1),1}\right), \!\!\!\dots, A^m(t),R^m(t),$ $\ind\! \left(\cC_{A^m(t),t}\right))$. 

Hence, our setting is fully distributed: a player cannot use extra information such as observations made by others to make her decisions. Under this constraint, we aim at maximizing the global reward collected by all players. If the mean rewards $\mu_k^m$ were known and a central controller would assign arms to players, this would boil down to finding a maximum matching between players and arms.

A \emph{matching} is a one-to-one assignment of players to arms; 
formally, any one-to-one function
$\pi : [M] \to [K]$ is a matching, where we use the shorthand $[n] \coloneqq \{1,\dots,n\}$ for any integer $n$.  The \emph{utility} (or \emph{weight}) of a matching $\pi$ is defined as
$ U(\pi)\coloneqq \sum_{m=1}^M \mu_{\pi(m)}^m$.
We denote by $\cM$ the set of all matchings and let
$U^\star \coloneqq \max_{\pi \in \cM} U(\pi)$ denote the maximum attainable utility.
A {\em maximum matching} (or {\em optimal matching)} is a matching with utility $U^\star$.
The strategy maximizing the social utility of the players (i.e. the sum of all their rewards) would be to play
in each round according to a maximum matching, and the {\em (expected) regret} with respect to that oracle is defined as
\[ R_T \coloneqq T U^\star - \bE\left[\sum_{t=1}^T\sum_{m=1}^M R^m(t)\right].\]

Our goal is to design a strategy (a sequence of arm pulls) for each player that minimizes the regret. 
Our regret  bounds will depend on the gap between the utility of the best matching and the utility of the matching with the second best utility, defined as $\Delta \coloneqq \inf_{\pi : \Delta(\pi) > 0} \ \Delta(\pi)$, where $\Delta(\pi)\coloneqq U^\star - U(\pi)$. Note that $\Delta > 0$ even in the presence of several optimal matchings.
In the degenerate case that $\Delta(\pi)=0$ for all matchings $\pi$, we define $\Delta\coloneqq\infty$.

\paragraph{Contributions} We propose an efficient algorithm for the heterogeneous multiplayer bandit problem achieving (quasi) logarithmic regret. The algorithm, called Multiplayer Explore-Then-Commit with matching Elimination (\NameSecond{}), is described in detail in Section~\ref{sec:init}. It combines the idea of exploiting collisions for implicit communication, initially proposed by \cite{BoursierPerchet18} for the homogeneous setting (which we have improved and adapted to our setting), with an efficient way to perform ``matching eliminations.''

\NameSecond{} consists of several epochs combining exploration and communication, and may end with an exploitation phase if a unique optimal matching has been found. The algorithm depends on a parameter $c$ controlling the epoch sizes and enjoys the following regret guarantees.



\begin{theorem}\label{thm:generalepochsizeShort}\label{thm:main}
	(a)
The \NameSecond{}  algorithm with parameter $c\in\{1,2,\dots\}$ satisfies
\begin{align*}
R_T &= 	O\left(MK \left(\frac{M^2\ln(T)}{\Delta}\right)^{1+1/c}\right) \textnormal{ if } \Delta\neq\infty\textnormal{, and }\\
R_T &=O\left( M^3K\log(K)\sqrt{\log T} + M^2K \log(T)^{1+1/c}  \right)
\end{align*}
if $\Delta=\infty$.

(b) If the maximum matching is unique, \NameSecond{} with  $c=1$ satisfies 
\begin{equation*}R_T = O\left(\frac{M^3K\ln(T)}{\Delta}\right).
\end{equation*}
(c) Regardless of whether the optimal matching is unique or not, \NameSecond{} with  $c=1$ satisfies the minimax regret bound
\begin{equation*}
R_T = O\left(M^{\frac{3}{2}}\sqrt{KT\ln(T)}\right).
\end{equation*}
\end{theorem}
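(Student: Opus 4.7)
The plan is to decompose $R_T$ into contributions from (i) the initialization phase, (ii) the exploration rounds in each epoch, (iii) the communication rounds encoded by forced collisions, and (iv) a possible exploitation phase, which contributes zero regret when triggered. The structure of \NameSecond{} ensures that by the end of epoch $p$ each player has sampled every arm a total of $T_p$ times, then broadcasts her estimates to the others through forced collisions, and finally all players apply a common matching-elimination rule; once only optimum-compatible matchings survive, exploitation begins. The initialization regret is controlled by a fixed coordination protocol whose length grows at most logarithmically in $T$, so it is absorbed in the bound; the communication phase of each epoch will be shown to be short relative to the exploration phase.

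My first technical step is to establish a high-probability event $\cE$ on which, at the end of every epoch $p$, the empirical utility $\widehat{U}(\pi)$ of every matching $\pi$ is within a confidence width $w_p$ of its true utility $U(\pi)$, with $w_p \sim M\sqrt{\log(T)/T_p}$ (the $M$ coming from summing $M$ empirical means and from the union bound over matchings). Hoeffding plus a union bound over the at most $K^M$ candidate matchings, the $MK$ (player, arm) pairs, and the $O(\log T)$ epochs gives $\pr{\cE^c}=O(1/T)$, so the bad-event contribution to $R_T$ is $O(M)$. Under $\cE$, the elimination rule keeps every optimal matching (its empirical utility stays within $w_p$ of $U^\star$) and discards any $\pi$ of gap $\Delta(\pi)$ as soon as $w_p<\Delta(\pi)/4$; moreover, every matching still alive in epoch $p$ has gap at most $4w_p$ and so contributes at most $O(w_p)$ regret per round.

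The second step is to sum the exploration regret across epochs. With $T_p$ growing polynomially in $p$ at rate $c$, the length of epoch $p$ is $\Theta(K(T_p-T_{p-1}))$, so the per-epoch exploration regret is of order $K(T_p-T_{p-1})\,w_p$. Summing up to the first index $p^\star$ at which $w_{p^\star}\leq\Delta/4$ and plugging in $T_{p^\star}\sim M^2\log(T)/\Delta^2$ yields part~(a) after bookkeeping of the $M$ and $K$ factors coming from the matching size and from the $K$ sub-phases needed to cover all (player, arm) pairs. When the optimum is unique the algorithm enters the zero-regret exploitation phase immediately after $p^\star$, which together with $c=1$ proves part~(b). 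Part~(c) follows by the standard minimax trick: fix a threshold $\Delta_0$; if $\Delta\geq\Delta_0$ use part~(a) with $c=1$ to get $O(M^3K\log(T)/\Delta_0)$, while if $\Delta<\Delta_0$ the trivial bound $R_T\leq MT\Delta_0$ applies; choosing $\Delta_0\sim M\sqrt{MK\log(T)/T}$ balances the two terms to $O(M^{3/2}\sqrt{KT\log T})$.

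The main obstacle I expect is the \textbf{communication phase}: each player must transmit her quantized estimates to all others using only forced collisions, without any shared external clock. The analysis must show that (i) the quantization grid is finer than $w_p$, so elimination decisions remain correct, (ii) the protocol keeps all players synchronized across epochs---otherwise the union bound in Step~1 breaks---and (iii) the total communication length is $o(\text{exploration length})$, so it does not enter the leading term. A secondary subtlety appears in part~(a) when multiple optima coexist: exploitation never starts, so I must check that the continued exploration past $p^\star$ is still inside the stated bound; this holds because on $\cE$ every surviving matching has gap at most $4w_p$, which decays as $p\to\infty$.
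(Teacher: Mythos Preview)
Your decomposition into initialization, exploration, communication, and exploitation is the right skeleton, and the good-event construction via Hoeffding plus a union bound matches the paper. Two inaccuracies about the algorithm deserve mention but are not fatal to parts (a) and (b): the epoch length is $2^{p^c}$ (exponential in $p^c$, not ``polynomial in $p$ at rate $c$''), and communication is leader--follower rather than all-to-all broadcast. Both affect the communication bookkeeping and the summations, but the high-level plan for (a) and (b) survives.

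The genuine gap is your argument for part (c). First, invoking ``part (a) with $c=1$'' does not give $O(M^3K\ln(T)/\Delta_0)$: for $c=1$ the exponent is $1+1/c=2$, so part (a) yields $O\bigl(MK(M^2\ln T/\Delta_0)^2\bigr)=O(M^5K\ln^2(T)/\Delta_0^2)$. You may have part (b) in mind, but that bound relies on the uniqueness assumption to control the number of epochs $\hat p_T$, so it is not available for (c). Second, and more seriously, the claim ``if $\Delta<\Delta_0$ the trivial bound $R_T\le MT\Delta_0$ applies'' is false: $\Delta$ is the \emph{minimum} positive gap, and even when it is below $\Delta_0$ the algorithm still spends its early epochs exploring matchings whose gaps are of order $M$, so the per-round regret in those rounds is not bounded by $\Delta_0$. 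A case split on the value of $\Delta$ therefore cannot close; at best you could split \emph{per edge} on $\Delta(\pi^{m,k})\gtrless\Delta_0$, but then the small-gap edges still require a separate argument for their early-epoch contribution.

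The paper avoids the threshold device entirely. On the good event, every candidate edge $(m,k)$ still alive at epoch $p$ has associated gap $\widetilde\Delta_p^{m,k}\le 8M\epsilon_{p-1}$, so its exploration regret up to its last active epoch $\widetilde P^{m,k}$ is at most a constant times $M\sqrt{\ln(1/\delta)}\cdot\sqrt{2}^{\,\widetilde P^{m,k}}$ by summing the geometric series. Since the total exploration time cannot exceed $T$, one has $\sum_{m,k}2^{\widetilde P^{m,k}}\le T$, and Cauchy--Schwarz gives $\sum_{m,k}\sqrt{2}^{\,\widetilde P^{m,k}}\le\sqrt{MKT}$, yielding the $M^{3/2}\sqrt{KT\ln T}$ bound directly, with no reference to $\Delta$ at all.
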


We emphasize that we carry out a non-asymptotic analysis of \NameSecond{}. The regret bounds of Theorem~\ref{thm:main} are stated with the $O(\cdot)$ notation for the ease of presentation
and the hidden constants depend on the chosen parameter $c$ only.
In Theorems~\ref{thm:generalepochsize},~\ref{thm:cyclethroughmatchings} and~\ref{thm:minmaxbound} we provide the counterparts of these results with explicit constants. 

A consequence of part (a) is that for a fixed problem instance, for any (arbitrarily small) $\kappa$, there exists an algorithm (\NameSecond{} with parameter $c=\lceil1/\kappa\rceil$) with regret $R_T=O((\ln(T))^{1+\kappa})$. This quasi-logarithmic regret rate improves upon the $O(\ln^2(T))$ regret rate of \cite{got}. Moreover, we provide additional theoretical guarantees for \NameSecond{} using the parameter $c=1$: an improved analysis in the presence of a unique optimal matching, which yields logarithmic regret (part (b)); and a problem-independent $O(\sqrt{T \ln T})$ regret bound (part (c)), which supports the use of this particular parameter tuning regardless of whether the optimal matching is unique. 
This is the first sublinear minimax regret bound for this problem.

To summarize, we present a unified algorithm that can be used in the presence of either a unique or multiple optimal matchings and get a nearly logarithmic regret in both cases, almost matching the known logarithmic lower bound.
Moreover, our algorithm is easy to implement, performs well in practice 
and does not need problem-dependent hyperparameter tuning.

\section{Related Work}\label{sec:related}

\paragraph{Centralized Variant} Relaxing the decentralization assumption, i.e., when a central controller is jointly selecting $A^1(t), \dots,A^M(t)$, our problem coincides with a combinatorial bandit problem with semi-bandit feedback, which is studied by \cite{GaiKJ12}. More precisely, introducing $M\times K$ \emph{elementary arms} with means $\mu_{m}^k$ for $m \in [M]$ and $k \in [K]$, the central controller selects at each time-step $M$ elementary arms whose indices form a matching. Then, the reward of each chosen elementary arm is observed and the obtained reward is their sum. A well-known algorithm for this setting is CUCB \citep{Chen13Comb}, whose regret satisfies $R_T = O\left(({M^2 K}/{\Delta})\ln(T)\right)$ \citep{KvetonWAS15}. \citet{Wang18} also proposed a Thompson sampling-based algorithm with a similar regret bound. Improved dependency in $M$ 
was obtained for the ESCB algorithm  \citep{CombUCB15,DegenneP16}, which is less numerically appealing as it requires to compute an upper confidence bound for each matching in every round. 
In this work, we propose an efficient algorithm with regret upper bounded by (roughly) $O\left(({M^3 K}/{\Delta})\ln(T)\right)$ for the more challenging decentralized setting. 


\paragraph{Homogeneous Variant} Back to the decentralized setting, the particular case in which all players share a common utility for all arms, i.e. $\mu^m_k = \mu_k$ for all $m \in [M]$, has been studied extensively:
the first line of work on this variant combines standard bandit algorithms with an orthogonalization mechanism \citep{liuzhao,anandkumar2011distributed,emily_multiplayer}, and obtains logarithmic regret, with a large multiplicative constant due to the number of collisions. \cite{musicalchair} proposes an algorithm based on a uniform exploration phase in which each player identifies the top $M$ arms, followed by a ``musical chairs'' protocol that allows each player to end up at a different arm quickly. Drawing inspiration from this musical chairs protocol, \cite{BoursierPerchet18} recently proposed an algorithm with an $O\left(\left( (K-M)/\Delta+ KM\right)\ln(T)\right)$ regret bound, which relies on two other crucial ideas:  {exploiting collisions for communication} and {performing arm eliminations}. Our algorithm also leverages these two ideas, with the following enhancements. The main advantage of our communication protocol over that of~\cite{BoursierPerchet18} is  that the followers only send each piece of information once, to the leader, instead of sending it to the $M-1$ other players. 
Then, while \cite{BoursierPerchet18} uses \emph{arm eliminations} (coordinated between players) to reduce the regret, we cannot employ the same idea for our heterogeneous problem, as an arm that is bad for one player might be good for another player, and therefore cannot be eliminated. Our algorithm instead relies on \emph{matching eliminations}. 

\paragraph{Towards the Fully Distributed Heterogeneous Setting} 
Various semi-distributed variants of our problem in which some kind of communication is allowed between players have been studied by
\cite{Avner2016,Kalathil2014, Nayyar2018}.
In particular, the algorithms proposed by ~\cite{Kalathil2014, Nayyar2018} require a pre-determined channel dedicated to communications: in some phases of the algorithm, players in turn send information (sequences of bits) on this channel, and it is assumed that all other players can observe the sent information. 


The fully distributed setting was first studied by \cite{got}, who proposed the Game-of-Thrones (GoT) algorithm and proved its regret is bounded by $O((\ln T)^{2+\kappa})$ for any given constant $\kappa>0$, if its parameters are ``appropriately tuned.'' 
In a recent preprint \citep{got_arxiv}, the same authors provide an improved analysis, showing the same algorithm (with slightly modified phase lengths) enjoys quasi-logarithmic regret $O((\ln T)^{1+\kappa})$. GoT is quite different from \NameSecond{}: it proceeds in epochs, each consisting of an exploration phase, a so-called GoT phase and an exploitation phase. During the GoT phase, the players jointly run a Markov chain whose unique stochastically stable state corresponds to a maximum matching of the estimated means. 
A parameter $\eps \in (0,1)$ controls the accuracy of the estimated maximum matching obtained after a GoT phase. 
Letting $c_1,c_2,c_3$ be the constants parameterizing the lengths of the phases, the improved analysis of GoT~\citep{got_arxiv} upper bounds its regret by 
$M c_3 2^{k_0+1} +
2(c_1 + c_2) M \log_2^{1+\kappa}
\left({T}/{c_3}+2\right).$ 
This upper bound is asymptotic as it holds for $T$ \emph{large enough}, where ``how large'' is not explicitly specified and \emph{depends on $\Delta$}.\footnote{ \cite[Theorem~4]{got_arxiv} requires $T$ to be larger than $ c_3(2^{k_0} - 2)$, where $k_0$ satisfies Equation (16), which features $\kappa$ and $\Delta$.} Moreover, the upper bound is  valid only when the parameter $\eps$ is chosen \emph{small enough}: $\eps$ should satisfy some constraints (Equations (66)-(67)) also featuring $\Delta$. Hence, a valid tuning of the parameter $\eps$  would require prior knowledge of arm utilities. In contrast, we provide in Theorem~\ref{thm:generalepochsize} a non-asymptotic regret upper bound for \NameSecond{}, which holds for any choice of the parameter $c$ controlling the epoch lengths. Also, we show that if the optimal assignment is unique, \NameSecond{} has logarithmic regret.
Besides, we also illustrate in Section~\ref{sec:experiments} that \NameSecond{} outperforms GoT in practice.
Finally, GoT has several parameters to set ($\delta, \eps, c_1,c_2,c_3$), while \NameSecond{} has only one integral parameter $c$, and setting $c=1$ works very well in all our experiments.

If $\Delta$ is known, an algorithm with similar ideas to \NameSecond{} with $O(\log T)$ regret was presented independently in the recent preprint of~\cite{magesh}.

Finally, 
the recent independent preprint of~\cite{Tibrewal} 
studies a slightly stronger feedback model than ours: they assume each player in each round has the option of ``observing whether a given arm has been pulled by someone,'' without actually pulling that arm (thus avoiding collision due to this ``observation''), an operation that is called ``sensing.'' Due to the stronger feedback, communications do not need to be implicitly done through collisions and bits can be broadcast to other players via sensing. Note that it is actually possible to send a single bit of information from one player to all other players in a single round in their model, an action that requires $M-1$ rounds in our model. 
Still, the algorithms proposed by \cite{Tibrewal} can be modified to obtain algorithms for our setting, and \NameSecond{} can also be adapted to their setting. The two algorithms proposed by \cite{Tibrewal} share similarities with \NameSecond{}: they also have exploration, communication and exploitation phases, but they do not use eliminations. Regarding their theoretical guarantees, a first remark is that those proved in~\cite{Tibrewal} only hold in the presence of a unique optimal matching, whereas our analysis of \NameSecond{} applies in the general case. 
The second remark is that their regret bounds for the case in which $\Delta$ is unknown (Theorems~3(ii) and~4) feature exponential dependence on the gap $1/\Delta$, whereas our regret bounds have polynomial dependence. Finally, the first-order term of their Theorem~4 has a quadratic dependence in $1/\Delta$, whereas our Theorem~\ref{thm:main}(b) scales linearly,  which is optimal (see the lower bounds section below)
and allows us to get the $\widetilde{O}(\sqrt T)$ minimax regret bound for \NameSecond{}.

\paragraph{Lower Bounds} The $\Omega(\ln(T))$ lower bound proven by \cite{got} hides the problem parameters;
we next review the lower bounds that flesh out the dependence
on $K,M$ and $\Delta$. In the (easier) centralized setting discussed above, an asymptotic lower bound of  $\Omega((K-M)\ln(T)/\Delta)$ is proved in the homogeneous case \citep{Anantharam87}. In the centralized  heterogeneous case, \cite{CombUCB15} obtain a general problem dependent lower bound for combinatorial semi-bandits of the form $c(\bm\mu,M)\ln(T)$ and show that $c(\bm\mu,M) = \Theta\left(K/\Delta\right)$ for many common combinatorial structures, including matchings. A minimax lower bound of $\Omega(\sqrt{MKT})$ was given by \cite{AudibertBL14} in the same setting. These lower bounds show that the dependency in $T,\Delta$ and $K$ obtained in Theorem~\ref{thm:main}(b),(c) are essentially not improvable, but that the dependency in $M$ might be. However, this observation can be mitigated by noting that finding an algorithm whose regrets attain the available lower bounds for combinatorial semi-bandits is already hard even without the extra challenge of decentralization.

\section{The \NameSecond{} Algorithm}\label{sec:init}

Our algorithm relies on an initialization phase in which the players elect a leader in a distributed manner. Then a communication protocol is set up, in which the leader and the followers have different roles: followers explore some arms and communicate to the leader estimates of the arm means, while the leader maintains a list of ``candidate optimal matchings'' and communicates to the followers the list of arms that need exploration in order to refine the list, i.e. to eliminate some candidate matchings. The algorithm is called \emph{Multiplayer Explore-Then-Commit with matching Eliminations}
(\NameSecond{} for short). Formally, each player executes  Algorithm~\ref{algo:GeneAlgo} below.

\begin{algorithm}[h]
	\DontPrintSemicolon
	\caption{\NameSecond{} with parameter $c$}
	\label{algo:GeneAlgo}
	\KwIn{Time horizon $T$, number of arms $K$}
	$R,M\longleftarrow \textnormal{\sc Init}(K,1/KT)$\;
	\leIf{$R=1$}{\textsc{LeaderAlgorithm}(M)}{\textsc{FollowerAlgorithm}(R,M)}
\end{algorithm}

\NameSecond{} requires as input the number of arms $K$ (as well as a shared numbering of the arms across the players) and the time horizon $T$ (the total number of arm selections). However, if the players know only an upper bound on $T$, our results hold with $T$ replaced by that upper bound as well. If no upper bound on $T$ is known, the players can employ a simple doubling trick \citep{DoublingPaper}:
we execute the algorithm assuming $T=1$,
then we execute it assuming $T=2\times 1$,
and so on, until the actual time horizon is reached.
If the expected regret of the algorithm for a known time horizon $T$
is $R_T$, then the expected regret of the modified algorithm for unknown time horizon $T$ would be 
$R'_T \leq \sum_{i=0}^{\log_2(T)} R_{2^i} \leq \log_2(T) \times R_T.$


\paragraph{Initialization} The initialization procedure, borrowed from \cite{BoursierPerchet18}, outputs for each player a rank $R \in [M]$ as well as the value of $M$, which is initially unknown to the players. This initialization phase relies on a ``musical chairs'' phase after which the players end up on distinct arms, followed by a  ``sequential hopping'' protocol that permits them to know their ordering. For the sake of completeness, the initialization procedure is described in detail in Appendix~\ref{sec:Init}, where we also prove the following.


\begin{lemma}\label{lem:init}
	Fix $\delta_0>0$. With probability at least $1-\delta_0$, if the $M$ players run the $\textnormal{\sc Init}(K,\delta_0)$ procedure, which takes $K  \ln (K/\delta_0) + 2K-2< K \ln(e^2 K /\delta_0)$ many rounds, all players learn $M$ and obtain a distinct ranking from 1 to $M$. 
\end{lemma}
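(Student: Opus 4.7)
The plan is to analyze the two sub-phases of $\textnormal{\sc Init}$ separately: a \emph{musical chairs} sub-phase of length $T_0 \coloneqq K\ln(K/\delta_0)$, followed by a deterministic \emph{sequential hopping} sub-phase of length $2K-2$.

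For the musical chairs sub-phase, I would show that with probability at least $1-\delta_0$ every player has fixed on an arm by the end of the sub-phase, and that the fixed arms are distinct. In each round, every still-active player samples an arm uniformly at random from $[K]$, and fixes on that arm iff no collision occurred in that round. The key claim is that, conditional on the past, the probability that a given active player $m$ fixes in the current round is at least $1/K$. To prove this, I would condition on the arms chosen in this round by the other $M-1$ players (which are selected independently of $m$'s draw): at most $M-1 \le K-1$ arms are ``bad'' for $m$ (already occupied by a fixed player or simultaneously chosen by another active player), so at least one arm is safe, and player $m$ selects it with probability at least $1/K$. Iterating gives, for each player $m$,
\[
\Pr(\text{player } m \text{ not fixed after } T_0 \text{ rounds}) \le (1-1/K)^{T_0} \le e^{-T_0/K} = \delta_0/K,
\]
and a union bound over the $M \le K$ players yields total failure probability at most $\delta_0$. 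Since each round in which a new player fixes does so on an arm that is not yet occupied, the fixed arms are automatically distinct.

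For the sequential hopping sub-phase, condition on the success of musical chairs, so the $M$ players occupy $M$ distinct ``starting arms''. I would specify the protocol in which each player cyclically shifts her selection by one arm per round (wrapping around $[K]$) and records the rounds at which she experiences a collision. Since each player's collision pattern is a deterministic function of the offsets between her starting arm and the others' starting arms, after at most $K-1$ shifts she can reconstruct the full set of starting arms and hence learn $M$; her rank is then defined as her position in the sorted list of starting arms. The extra rounds (bringing the total to $2K-2$) are used to synchronise the start of the next algorithmic phase across players. This sub-phase contributes no failure probability, and the total length is $T_0 + 2K - 2 = K\ln(K/\delta_0) + 2K - 2 < K\ln(e^2 K / \delta_0)$.

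The main obstacle is justifying the per-round $1/K$ lower bound uniformly over the adaptive evolution of the process (arms get fixed over time, so the set of safe arms shrinks). The clean resolution is to condition on the entire history together with the current-round choices of the other players before taking the probability over player $m$'s own uniform draw; then the argument reduces to the deterministic counting observation above. Once this is in place, the geometric decay, the union bound over players, and the deterministic analysis of sequential hopping are all straightforward.
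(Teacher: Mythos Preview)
Your analysis of the musical-chairs sub-phase is correct and matches the paper's argument essentially verbatim: condition on the other players' current-round choices, observe that at least one arm is free, so the fixing probability is at least $1/K$; then $(1-1/K)^{T_0}\le e^{-T_0/K}=\delta_0/K$ and a union bound over $M\le K$ players gives failure probability at most $\delta_0$.

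The gap is in the second sub-phase. You describe a protocol in which ``each player cyclically shifts her selection by one arm per round (wrapping around $[K]$)'' and then infers $M$ and her rank from the collision pattern. But if \emph{all} players shift by one arm simultaneously, the relative offsets never change and \emph{no collisions ever occur}, so nothing can be learned. The actual $\textsc{Init}$ procedure is not a synchronous cyclic shift: the player who ended musical chairs on arm $k$ first \emph{stays} on arm $k$ for $2k-2$ further rounds, then sweeps through arms $k+1,k+2,\dots,K$, and finally pads by pulling arm $1$ for the remaining $K-k$ rounds (so the total is $(2k-2)+(K-k)+(K-k)=2K-2$ for every player). The key combinatorial fact is that two players who occupied arms $k_1<k_2$ collide exactly once, at round $T_0+k_1+k_2-2$: the player from $k_1$ is sweeping and reaches arm $k_2$ precisely while the player from $k_2$ is still waiting there. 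Hence the number of collisions a player experiences during her waiting period equals the number of players with smaller starting arm (giving her rank), and the total number of collisions she experiences over the whole sub-phase equals $M-1$ (giving her $M$). You should replace your cyclic-shift sketch with this argument; once that is done, the rest of your write-up (deterministic second phase, total length $K\ln(K/\delta_0)+2K-2<K\ln(e^2K/\delta_0)$) goes through.
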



\paragraph{Communication Phases} Once all players have learned their ranks, player 1 becomes the \emph{leader} and other players become the \emph{followers}. The leader executes additional computations, and communicates with the followers  individually, while each follower communicates only with the leader. 

The leader and follower algorithms, described below, rely on several \emph{communication phases}, which start at the same time for every player. During communication phases, the default behavior of each player is to pull her \emph{communication arm}. It is crucial that these communication arms are distinct: an optimal way to do so is for each player to use her arm in the best matching found so far. In the first communication phase, such an assignment is unknown and players simply use their ranking as communication arm. 
Suppose at a certain time the leader wants to send a sequence of $b$ bits $t_1,\dots,t_b$ to the player with ranking $i$ and communication arm $k_i$.
During the next $b$ rounds, for each $j=1,2,\dots, b$, if $t_j=1$, the leader pulls arm $k_i$; otherwise, she pulls her own communication arm $k_1$, while all followers stick to their communication arms. Player $i$ can thus reconstruct these $b$ bits after these $b$ rounds, by observing the collisions on arm $k_i$. 
The converse communication between follower $i$ and the leader is similar.
The rankings are also useful to know \emph{in which order communications should be performed}, as the leader successively communicates messages to the $M-1$ followers, and then the $M-1$ followers successively communicate messages to the leader. 

Note that in case of unreliable channels where some of the communicated bits may be lost, there are several options to make this communication protocol more robust, such as sending each bit multiple times or using the Bernoulli signaling protocol of \cite{Tibrewal}. Robustness has not been the focus of our work.


\paragraph{Leader and Follower Algorithms} 
The leader and the followers perform distinct algorithms, explained next.
Consider a bipartite graph
with parts of size $M$ and $K$, where the edge $(m,k)$  has weight  $\mu_k^m$ and associates player $m$ to arm $k$.
The weights $\mu_k^m$ are unknown to the players, but the leader maintains a set of \emph{estimated}  weights that are sent to her by the followers, and approximate the real weights. 
The goal of these algorithms is for the players to jointly explore the matchings in this graph, while gradually focusing on better and better matchings. 
For this purpose, the leader maintains a set of {\em candidate edges} $\cE$, which is initially $[M]\!\times\![K]$, that can be seen as edges that are potentially contained in optimal matchings, and gradually refines this set by performing eliminations, based on the information obtained from the exploration phases and shared during communication phases.

\NameSecond{} proceeds in epochs whose length is parameterized by $c$. In epoch $p=1,2,\dots$, the leader weights the edges using the estimated weights.
Then for every edge $(m,k) \in \cE$, the leader computes the associated matching $\widetilde{\pi}_p^{m,k}$ defined as the estimated maximum matching containing the edge $(m,k)$. This computation can be done in polynomial time using, e.g., the Hungarian algorithm \citep{hungarian}. The leader then computes the utility of the maximum matching and eliminates from $\cE$ any edge for which the weight of its associated matching is smaller by at least $4 M \eps_p$, where 
\begin{small}
\begin{equation}
\eps_p \coloneqq \sqrt{\frac{\ln (2/\delta)}{2^{1+p^c}}},
\text{ with } \delta \coloneqq \frac{1}{M^2KT^2}.
\label{def:deltaeps2new}
\end{equation}
\end{small}%
The leader then forms the set of associated candidate matchings $\cC \coloneqq\{\widetilde{\pi}_p^{m,k} , (m,k) \in \cE\}$   and communicates to each follower the list of arms to explore in these matchings. 
Then exploration begins, in which for each candidate matching every player pulls its assigned arm $2^{p^c}$ times and records the received reward. Then another communication phase begins, during which each follower sends her observed estimated mean for the arms to the leader. More precisely, for each explored arm, the follower truncates the estimated mean (a number in $[0,1]$) and sends only the $\frac{p^c+1}{2}$ most significant bits of this number to the leader.
The leader updates the estimated weights and  everyone proceeds to the next epoch. If at some point the list of candidate matchings $\cC$ becomes a singleton, it means that (with high probability) the actual maximum matching is unique and has been found; so all players  jointly pull that matching for the rest of the game (the exploitation phase).

\paragraph{Possible Exploitation Phase} Note that in the presence of several optimal matchings, the players will not enter the exploitation phase but will keep exploring several optimal matchings, which still ensures small regret. On the contrary, in the presence of a unique optimal matching, they are guaranteed to eventually enter the exploitation phase.\footnote{This different behavior is the main reason for the improved regret upper bound obtained when the optimal matching is unique.} Also, observe that the set $\cC$ of candidate optimal matchings does not necessarily contain \emph{all} potentially optimal matchings, but all the edges in those matchings remain in $\cE$ and are guaranteed to be explored.

The pseudocode for the leader's algorithm is given below, while the corresponding follower algorithm appears in Appendix~\ref{sec:Init}. In the pseudocodes, \texttt{(comm.)} refers to a call to the communication protocol.

	\begin{procedure}[h]\begin{small}
		\DontPrintSemicolon
		\caption{LeaderAlgorithm(M) for the \NameSecond{} algorithm with parameter $c$}
		\KwIn{Number of players $M$}
		$\cE \longleftarrow [M]\times[K]$\tcp*[r]{list of candidate edges}
		$\widetilde{\mu}^m_{k} \longleftarrow 0$ for all $(m,k)\in[M]\times[K]$\tcp*[r]{empirical estimates for utilities}
		\For{$p=1,2,\dots$} {
			$\cC \longleftarrow \emptyset$\tcp*[r]{list of associated matchings}
			$\pi_1 \longleftarrow \argmax{} \left\{ \sum_{n=1}^{M} \widetilde{\mu}^n_{\pi(n)} : \pi \in \cM
			\right\}$\tcp*[r]{using Hungarian algorithm}
			\For{$(m,k) \in \cE$}
			{	$\pi \longleftarrow \argmax{} \left\{ \sum_{n=1}^{M} \widetilde{\mu}^n_{\pi(n)} : \pi(m)=k \right\}$\tcp*[r]{using Hungarian algorithm}
				\lIf{$\sum_{n=1}^{M}\!\! \left\{\widetilde{\mu}^n_{\pi_1(n)
					}-\widetilde{\mu}^n_{\pi(n)
					}\!\right\}\! \leq \! 4M\epsilon_p$\!\!
					\label{gapdefined}}
				{add $\pi$ to $\cC$}
				\lElse{remove $(m,k)$ from $\cE$}
			}
			\For{each player $m=2,\dots,M$}{
				Send to player $m$ the value of $\operatorname{size}(\cC)$\tcp*[f]{(comm.)}
				\For{$i=1,2,\dots,\operatorname{size}(\cC)$} {
					Send to player $m$ the arm associated to player $m$ in $\cC[i]$
					\tcp*[r]{(comm.)}
				}
				Send to player $m$ the communication arm of the leader and player $m$, namely $\pi_1(1)$ and $\pi_1(m)$
			}
			\lIf{$size(\cC)=1$} {pull for the rest of the game the arm assigned to player $1$ in the unique matching in $\cC$\tcp*[f]{enter the exploitation phase}}
			
			
			\For{$i=1,2,\dots,\operatorname{size}(\cC)$} {
				pull $2^{p^c}$ times the arm assigned to player $1$ in the matching $\cC[i]$\tcp*[r]{exploration} 
			}
			\For{$k=1,2,\dots,K$}{
				$\widetilde{\mu}_k^1\longleftarrow$ empirically estimated utility of arm $k$ if it was pulled in this epoch, 0 otherwise\;
			}
			Receive the values $\widetilde{\mu}_1^m,\widetilde{\mu}_2^m,\dots,\widetilde{\mu}_K^m$ from each player $m$\tcp*[r]{(comm.)}
		}\end{small}
	\end{procedure}


\section{Analysis of \NameSecond{}}\label{sec:ProofMain}
We may assume that $K\leq T$, otherwise all parts of Theorem~\ref{thm:main} would be trivial, since $R_T\leq MT$ always.
Theorem~\ref{thm:generalepochsize} provides a non-asymptotic upper bound on the regret of \NameSecond{} when $\Delta\neq\infty$. 
 
\begin{theorem}\label{thm:generalepochsize} Let $\pi^{m,k}$ be the best suboptimal matching assigning arm $k$ to player $m$, namely,
\[\pi^{m,k}\in \argmax{} \left\{ U(\pi): \pi(m)=k \text{ and } U(\pi)<U^\star\right\}.\]
For all $c \geq 1$, let $T_0(c):=\exp\left(2^{\frac{c^c}{\ln^c(1+\frac{1}{2c})}}\right)$. For all $T \geq T_0(c)$, if $\Delta\neq\infty$, the regret of \NameSecond{} with parameter $c$ is upper bounded as\footnote{In this paper, $\ln(\cdot)$ and $\lg(\cdot)$ denote the natural logarithm and the logarithm in base 2, respectively.}
\begin{small}
\begin{align*}
R_T &\leq 2+MK \ln (e^2 K^2 T)
+  6 M^2 K \lg(K) (\lg T)^{1/c}\\
&+  e^2 M K  (\lg T)^{1+1/c} + \frac{2M^3K\lg (K)}{\sqrt{2}-1}\sqrt{\ln(2M^2 K T^2)} \\ &+  \frac{2 \sqrt{2}}{3-2\sqrt{2}} M^2K \sqrt{\ln(2M^2KT^2)} \lg(\ln(T))\\
& + \frac{2\sqrt{2}-1}{\sqrt{2}-1} \!\!\!\!\!\mathlarger{\mathlarger{\sum_{(m,k) \in [M] \times [K]}} }\!\!\!\!\!
\left(\!\frac{32 M^2 \ln(2M^2KT^2)}{\Delta(\pi^{m,k})}\!\right)^{1+1/c}\!\!.
\end{align*}\end{small}
\end{theorem}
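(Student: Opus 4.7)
I decompose $R_T$ into four pieces: regret during the initialization phase, regret from a low-probability failure event, regret during the exploration sub-phases of each epoch, and regret during the communication sub-phases. The initialization is handled directly by Lemma~\ref{lem:init} with $\delta_0=1/(KT)$: it takes at most $K\ln(e^2 K^2 T)$ rounds (contributing at most $MK\ln(e^2 K^2 T)$ regret) and its failure contributes at most $M/K \leq 1$ regret, since $R_T\leq MT$ trivially. The remainder of the analysis is carried out on the clean event $\mathcal{G}$ on which, for every epoch $p$, player $m$, and arm $k$ that $m$ pulls in epoch $p$, the empirical average $\widehat{\mu}_k^m$ of the corresponding $2^{p^c}$ samples satisfies $|\widehat{\mu}_k^m-\mu_k^m|\leq\varepsilon_p$. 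Hoeffding together with a union bound over at most $MK\lg T$ triples yields $\Pr(\mathcal{G}^c)\leq 2MK(\lg T)\delta$ with $\delta=1/(M^2KT^2)$, contributing $O(1)$ to $R_T$. The $(p^c+1)/2$-bit truncation used when followers transmit to the leader adds a further $2^{-(p^c+1)/2}\leq\varepsilon_p$ error (using $\ln(2/\delta)\geq 1$), so $|\widetilde{\mu}_k^m-\mu_k^m|\leq 2\varepsilon_p$ and hence $|\widetilde{U}(\pi)-U(\pi)|\leq 2M\varepsilon_p$ for every matching $\pi$.

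This uniform accuracy gives $\widetilde{U}(\pi_1)-\widetilde{U}(\pi^{m,k})\geq\Delta(\pi^{m,k})-4M\varepsilon_p$, so edge $(m,k)$ is removed from $\mathcal{E}$ as soon as $\Delta(\pi^{m,k})>8M\varepsilon_p$, i.e., once $2^{p^c}>32M^2\ln(2/\delta)/\Delta(\pi^{m,k})^2$. Denote this first elimination epoch by $P^\star(m,k)$. Since $\mathcal{C}_p\subseteq\{\pi^{m,k}:(m,k)\in\mathcal{E}_p\}$ and each matching in $\mathcal{C}_p$ is pulled $2^{p^c}$ times, the total exploration regret is bounded by $\sum_{(m,k):\Delta(\pi^{m,k})>0}\Delta(\pi^{m,k})\sum_{p=1}^{P^\star(m,k)}2^{p^c}$. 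Because $(p+1)^c-p^c\geq 1$ for $c\geq 1$ and $p\geq 1$ (Bernoulli), the inner sum is a geometric-type series controlled by a constant multiple of its last term $2^{P^\star(m,k)^c}$; careful handling of the ceiling in $P^\star(m,k)=\lceil(\lg(32M^2\ln(2/\delta)/\Delta(\pi^{m,k})^2))^{1/c}\rceil$ via the expansion $(L^{1/c}+1)^c\leq L+cL^{1-1/c}(1+o(1))$, with $L:=\lg(32M^2\ln(2/\delta)/\Delta(\pi^{m,k})^2)$, converts the exponent into the desired form $1+1/c$ and produces the claimed per-edge summand $(32M^2\ln(2M^2KT^2)/\Delta(\pi^{m,k}))^{1+1/c}$. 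The hypothesis $T\geq T_0(c)$ is precisely what guarantees $L$ is large enough for this Taylor-style correction to be tight.

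The communication regret is counted by bits per epoch: the leader sends each of the $M-1$ followers an $O(\lg K)$-bit value $|\mathcal{C}_p|$, then $|\mathcal{C}_p|\lceil\lg K\rceil+O(\lg K)$ bits encoding the matching arms and the communication arms; each follower then sends $K(p^c+1)/2$ bits back. Each bit costs $O(M)$ system-level regret since one round is consumed while every player is occupied, and summing over the at most $P_{\max}=\lceil(\lg T)^{1/c}\rceil$ epochs yields both the $M^2K\lg(K)(\lg T)^{1/c}$ contribution (using $|\mathcal{C}_p|\leq MK$) and the $MK(\lg T)^{1+1/c}$ contribution (from the transmitted estimates). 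The two $\sqrt{\ln(2M^2KT^2)}$-scaled terms arise from a finer accounting of the communication regret in the post-elimination regime, where all $\pi^{m,k}$ with positive gap are gone, $|\mathcal{C}_p|$ stabilizes to the number of remaining optimal matchings, and only a residual $\sqrt{\ln}$-scaled cost persists up to horizon $T$. The main obstacle is the sharp ceiling estimate extracting the exponent $1+1/c$ rather than a much weaker $c^c$- or $2^c$-type factor; everything else reduces to routine geometric-series and Hoeffding bookkeeping, together with verifying that Hoeffding with sample size $n=2^{p^c}$ is the correct choice given the pseudocode's per-epoch reset of $\widetilde{\mu}$.
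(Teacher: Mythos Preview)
Your decomposition and the handling of initialization and the bad event are fine, but there are two genuine gaps.

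\textbf{Exploration regret.} You write $\mathcal{C}_p\subseteq\{\pi^{m,k}:(m,k)\in\mathcal{E}_p\}$ and then bound the exploration regret by $\sum_{(m,k)}\Delta(\pi^{m,k})\sum_{p\leq P^\star(m,k)}2^{p^c}$. But the matching actually explored for edge $(m,k)$ in epoch $p$ is the \emph{empirical} maximizer $\widetilde{\pi}_p^{m,k}\in\arg\max\{\widetilde{U}_{p-1}(\pi):\pi(m)=k\}$, not the true best suboptimal matching $\pi^{m,k}$. Its gap $\widetilde{\Delta}_p^{m,k}=U^\star-U(\widetilde{\pi}_p^{m,k})$ is random and can exceed $\Delta(\pi^{m,k})$, especially in early epochs when the estimates are loose. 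Your bound therefore does not follow. The paper confronts exactly this issue: from the elimination rule one only knows $\widetilde{\Delta}_p^{m,k}\leq 8M\varepsilon_{p-1}$, and a separate step (Lemma~\ref{lemma:exploelimm}) relates this to $\Delta(\pi^{m,k})$ via the inequality $\widetilde{\Delta}_p^{m,k}\leq(\varepsilon_{p-1}/\varepsilon_{P})\Delta(\pi^{m,k})$ together with a convexity argument on $p\mapsto p^c$ to show $\widetilde{\Delta}_p^{m,k}2^{p^c}\leq \Delta(\pi^{m,k})2^{P^c}/\sqrt{2}^{\,P-p-1}$. Only then does the sum over $p$ collapse to a constant times $\Delta(\pi^{m,k})2^{P^c}$. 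Your ``geometric-type series controlled by its last term'' argument is applied to the wrong quantity.

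\textbf{Communication regret.} The estimate ``each bit costs $O(M)$ system-level regret'' is too crude and would inflate the $M^2K\lg(K)(\lg T)^{1/c}$ term by an extra factor of $M$. The paper's bound relies on the fact that during communication all players sit on the \emph{best estimated matching} $\pi_1$ of the previous epoch, whose gap is at most $2M\varepsilon_{p-1}$ on $\mathcal{G}_T$; a single communication round thus costs at most $2+2M\varepsilon_{p-1}$ (two for the colliding pair, $2M\varepsilon_{p-1}$ for the near-optimal matching everyone else occupies). Summing $M\varepsilon_{p-1}$ times the per-epoch communication length over all epochs is what produces the two $\sqrt{\ln(2M^2KT^2)}$-scaled terms, since $\sum_p \varepsilon_{p-1}$ and $\sum_p p^c\varepsilon_{p-1}$ are convergent series with value proportional to $\sqrt{\ln(2/\delta)}$. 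Your attribution of those terms to a ``post-elimination regime where $|\mathcal{C}_p|$ stabilizes'' is not the mechanism at work.
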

%
%
The first statement of Theorem~\ref{thm:generalepochsizeShort}(a) easily follows by lower bounding 
$\Delta(\pi^{m,k})\geq \Delta$ for all $m,k$. 
The second statement is proved by noting that if $\Delta=\infty$, then the exploration phases incur zero regret, so in that case a variant of Theorem~\ref{thm:generalepochsize} holds without the last term on the right-hand-side.
Parts (b) and (c) of Theorem~\ref{thm:generalepochsizeShort} similarly follow respectively from Theorems~\ref{thm:cyclethroughmatchings} and~\ref{thm:minmaxbound} in Appendices~\ref{app:uniquematching} and~\ref{app:minmaxbound}, with proofs similar to that of Theorem~\ref{thm:generalepochsize} presented below.

The constant $T_0(c)$ in Theorem~\ref{thm:generalepochsize} equals 252 for $c=1$ but becomes large when $c$ increases. Still, the condition on $T$ is explicit and independent of the problem parameters. In the case of multiple optimal matchings, our contribution is mostly theoretical, as we would need a large enough value of $c$ and a long time $T_0 (c)$ for reaching a prescribed $\ln^{ 1+o(1)} (T)$ regret. However, in the case of a unique optimal matching (common in practice, and sometimes assumed in other papers), for the choice $c = 1$, the logarithmic regret upper bound stated in Theorem~\ref{thm:cyclethroughmatchings} is valid for all $T\geq1$. Even if there are several optimal matchings, the minimax bound of Theorem~\ref{thm:minmaxbound} gives an $O(\sqrt{T \ln T})$ regret bound that is a best-possible worst-case bound (also known as the minimax rate), up to the $\sqrt {\ln T}$ factor.  Hence M-ETC-Elim with $c = 1$ is particularly good, both in theory and in practice. Our experiments also confirm that for $c = 1,2$ the algorithm performs well (i.e., beats our competitors) even in the presence of multiple optimal matchings.

\subsection{Sketch of Proof of Theorem~\ref{thm:generalepochsize}}
The analysis relies on several lemmas with proofs delayed to Appendix~\ref{app:elimproof}.
Let $\cC_p$ denote the set of candidate matchings used in epoch $p$, and for each matching $\pi$ let $\tilde{U}_p(\pi)$ be the utility of $\pi$ that the leader can estimate based on the information received by the end of epoch $p$. Let $\hat{p}_T$ be the total number of epochs before the (possible) start of the exploitation phase. As $2^{\hat{p}_T^c} \leq T$, we have $\hat{p}_T \leq \lg(T)$. Recall that a successful initialization means all players identify $M$ and their ranks are  distinct. Define the \emph{good event}
\begin{align} 
	& \cG_T  \coloneqq \Big\{ \textsc{Init}(K,1/KT) \text{ is successful and }\notag \\ 
	&\forall p \leq \hat{p}_T, \forall \pi \in \cC_{p+1}, |\tilde{U}_p(\pi) - U(\pi)|\leq 2M\epsilon_p\Big\}.\label{def:GoodEvent}
\end{align}
During epoch $p$, for each candidate edge $(m,k)$, player $m$ has pulled arm $k$ at least $2^{p^c}$ times and the quantization error is smaller than $\epsilon_p$. Hoeffding's inequality and a union bound over at most $\lg(T)$ epochs (see Appendix~\ref{sec:conc}) together with Lemma~\ref{lem:init} yield that $\cG_T$ holds with large probability.

\begin{lemma}\label{lem:conc}
$\bP\left(\cG_T\right) \geq 1 - \frac{2}{MT}$.
\end{lemma}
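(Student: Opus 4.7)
The plan is to combine Hoeffding's inequality, applied to each relevant player-arm pair, with the quantization bound used in the communication protocol, and then take a union bound.

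First, I would condition on the success of the initialization, which by Lemma~\ref{lem:init} (instantiated with $\delta_0 = 1/(KT)$) fails with probability at most $1/(KT) \leq 1/(MT)$ since $M\leq K$. On the complementary event, all players know $M$ and have a valid ranking, so the communication and exploration phases run as intended.

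Next, fix an epoch $p \leq \hat{p}_T$ and a pair $(m,k) \in \cE$ used in $\cC_{p+1}$. By construction of the algorithm, arm $k$ is sampled at least $2^{p^c}$ times by player $m$ during the exploration subphase of epoch $p$, yielding an empirical mean $\hat{\mu}_k^m$. Hoeffding's inequality gives
\begin{equation*}
\bP\!\left(|\hat{\mu}_k^m - \mu_k^m| > \epsilon_p\right) \leq 2\exp\!\left(-2\cdot 2^{p^c}\epsilon_p^2\right) = 2\exp(-\ln(2/\delta)) = \delta,
\end{equation*}
by the defining identity $2^{1+p^c}\epsilon_p^2 = \ln(2/\delta)$. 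The value $\tilde{\mu}_k^m$ actually received by the leader is $\hat{\mu}_k^m$ truncated to its $(p^c+1)/2$ most significant bits, so the additional quantization error satisfies $|\tilde{\mu}_k^m - \hat{\mu}_k^m| \leq 2^{-(p^c+1)/2} = \epsilon_p/\sqrt{\ln(2/\delta)} \leq \epsilon_p$, where the last inequality holds because $\delta = 1/(M^2KT^2) < 2/e$ for all reasonable $M,K,T$. Combined with the triangle inequality, $|\tilde{\mu}_k^m - \mu_k^m| \leq 2\epsilon_p$ with probability at least $1-\delta$.

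For each $\pi \in \cC_{p+1}$, since $\tilde{U}_p(\pi) - U(\pi) = \sum_{m=1}^M (\tilde{\mu}_{\pi(m)}^m - \mu_{\pi(m)}^m)$, another triangle inequality gives $|\tilde{U}_p(\pi) - U(\pi)| \leq 2M\epsilon_p$ on the intersection of the above events over the $M$ edges of $\pi$. A union bound over the at most $MK$ pairs $(m,k)$ that can appear in some candidate matching, and over at most $\lg T$ epochs, yields a total deviation probability of at most
\begin{equation*}
MK \lg(T)\,\delta = \frac{\lg(T)}{MT^2} \leq \frac{1}{MT}
\end{equation*}
for all $T\geq 1$. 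Adding this to the $1/(MT)$ initialization failure bound via a final union bound gives $\bP(\cG_T^c) \leq 2/(MT)$, as required. The only real subtlety is ensuring the quantization truncation is no worse than $\epsilon_p$; this is precisely what the definition \eqref{def:deltaeps2new} of $\epsilon_p$ via $\delta$ is tuned to guarantee.
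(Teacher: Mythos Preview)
Your argument is correct and follows essentially the same route as the paper's proof: Hoeffding on each player--arm estimate, the truncation bound $2^{-(p^c+1)/2}\leq \eps_p$, a union bound over at most $MK$ edges and $\lg T$ epochs, and the initialization failure bound from Lemma~\ref{lem:init}. Your explicit check that $2^{-(p^c+1)/2}=\eps_p/\sqrt{\ln(2/\delta)}\leq \eps_p$ via $\delta<2/e$ is in fact slightly more careful than the paper's one-line assertion of this inequality.
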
 

If $\cG_T$ does not hold, we may upper bound the regret by $MT$. Hence it suffices to bound the expected regret conditional on $\cG_T$, and the unconditional expected regret is bounded by this value plus 2.

Suppose that $\cG_T$ happens. First, the regret incurred during the initialization phase is upper bounded by $MK \ln (e^2K^2T)$ by Lemma~\ref{lem:init}. 
Moreover, the gap between the best estimated matching of the previous phase and the best matching is at most $2M \epsilon_{p-1}$ during epoch $p$. Any single communication round then incurs regret at most $2 + 2M\epsilon_{p-1}$, the first term being due to the collision between the leader and a follower, the second to the gap between the optimal matching and the matching used for communication. Summing over all communication rounds and epochs leads to Lemma~\ref{lem:initcomm} below.
\begin{lemma}\label{lem:initcomm}
The regret due to communication is bounded by
\begin{align*}
	3 & M^2K \lg (K) \hat{p}_T + \frac{2^c\sqrt{2}}{3-2\sqrt{2}} M^2 K\sqrt{\ln(2/\delta)}\\
	&+M K (\hat{p}_T)^{c+1}+ \frac{2M^3K\lg (K)}{\sqrt{2} -1} \sqrt{\ln(2/\delta)} . \end{align*} 
\end{lemma}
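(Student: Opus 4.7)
The plan is to split the argument into three pieces: counting the number $N_p$ of rounds spent communicating in epoch $p$, bounding the per-round regret, and summing $\sum_{p=1}^{\hat p_T}N_p\cdot(\text{per-round regret})$. For the count, in each epoch the leader first sends every follower the integer $\operatorname{size}(\cC)$ (in $O(\lg(MK))$ bits), then one arm index per candidate matching ($|\cC|\lg K$ bits in total), and finally the two new communication-arm indices ($2\lg K$ bits); after exploration each follower sends back $K$ utility estimates truncated to $(p^c+1)/2$ bits apiece. Summing these contributions over the $M-1$ followers and using $|\cC|\le MK$ will yield a bound of the form $N_p\le 3M^2K\lg(K)+\tfrac{1}{2}MKp^c$, up to lower-order terms. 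The first summand captures the meta-information exchanged at the beginning of each epoch and the second captures the estimate-transmission bandwidth, which grows with $p$.

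For the per-round regret, during any communication round the $M$ players pull arms forming a matching that differs from the optimum by (a) the gap $\Delta(\pi_1^{(p)})$ of the leader's current best estimated matching (used as the communication template), plus (b) at most $2$ whenever the leader pulls a follower's arm to signal a ``$1$''-bit. Since $\pi_1^{(p)}$ is the $\arg\max$ of the empirical utilities aggregated up to the end of epoch $p-1$, the good event $\cG_T$ and the estimation guarantee behind Lemma~\ref{lem:conc} will give $\Delta(\pi_1^{(p)})\le 2M\epsilon_{p-1}$ for $p\ge 2$; for $p=1$ the default ranking matching incurs a trivial gap $\le M$, which is absorbed by $2M\epsilon_0$. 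Hence the per-round regret in epoch $p$ is at most $2+2M\epsilon_{p-1}$.

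Conditional on $\cG_T$, the total communication regret is therefore at most $\sum_{p=1}^{\hat p_T}N_p(2+2M\epsilon_{p-1})$, which I will split into four pieces matching the four summands of the statement. The two pieces not involving $\epsilon_{p-1}$ give $O(M^2K\lg(K)\hat p_T)$ (sharpening the constant to $3$) and $\tfrac{1}{2}MK\sum_{p\le\hat p_T}p^c\le MK\hat p_T^{\,c+1}$, producing the first and third summands. For the remaining two pieces, I use $\epsilon_{p-1}=\sqrt{\ln(2/\delta)/2^{1+(p-1)^c}}$, which decays (super-)geometrically in $p$: the ``constant-in-$p$'' part of $N_p$ multiplied by $\epsilon_{p-1}$ forms a genuine geometric series with ratio $\le 1/\sqrt 2$, summing to $\sqrt{\ln(2/\delta)}/(\sqrt 2-1)$ and yielding the fourth summand. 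The main obstacle is the last piece, $\sum_p p^c\epsilon_{p-1}$, which is not geometric: my plan is to verify that the ratio of consecutive terms is $\le 1/\sqrt 2$ for all $p\ge 1$ once $c\ge 1$, then absorb the polynomial factor $p^c$ into the multiplicative prefactor $2^c$ via a tail comparison with a standard geometric series, yielding the second summand $\tfrac{2^c\sqrt 2}{3-2\sqrt 2}M^2K\sqrt{\ln(2/\delta)}$.
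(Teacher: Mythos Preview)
Your overall decomposition---count communication rounds $N_p$, bound the per-round regret by $2+2M\epsilon_{p-1}$ on $\cG_T$, and split the product $\sum_p N_p(2+2M\epsilon_{p-1})$ into four pieces---is exactly the paper's argument, and the first three pieces go through as you describe.

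The gap is in your treatment of the last piece $\sum_{p\ge 1} p^c\epsilon_{p-1}$. Your plan hinges on the claim that consecutive ratios are at most $1/\sqrt 2$ for all $p\ge 1$, but this is false. Writing $a_p:=p^c/\sqrt 2^{\,1+(p-1)^c}$, one has
\[
\frac{a_2}{a_1}=\frac{2^c/\sqrt 2^{\,2}}{1/\sqrt 2^{\,1}}=\frac{2^c}{\sqrt 2}=2^{\,c-1/2}\ge\sqrt 2
\]
for every $c\ge 1$; the sequence actually \emph{increases} for the first few $p$. For $c=1$ the ratio $(p+1)/(p\sqrt 2)$ stays strictly above $1/\sqrt 2$ for every $p$, so no geometric comparison with ratio $1/\sqrt 2$ can work. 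The paper avoids this by a reindexing trick rather than a ratio test: after the shift $p\mapsto p-1$ the summand is $(1+(p+1)^c)/\sqrt 2^{\,1+p^c}$, and the elementary bound $(p+1)^c\le 2^c p^c$ (valid for $p\ge 1$) gives $1+(p+1)^c\le 2^c(1+p^c)$. Setting $n=1+p^c$ and noting these $n$ are distinct positive integers, the sum is dominated by $2^c\sum_{n\ge 1} n/\sqrt 2^{\,n}=2^c\sqrt 2/(3-2\sqrt 2)$, which is precisely the constant you are aiming for. Replace your ratio argument by this step.

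A minor bookkeeping remark: your count $N_p\le 3M^2K\lg K+\tfrac12 MKp^c$ overshoots the leading constant. The paper takes $N_p\le M^2K\lg K+\tfrac12 MK(1+p^c)$; the factor $2$ in $2(1+M\epsilon_{p-1})$ then produces $2M^2K\lg K+MK(1+p^c)$, and the residual $MK\hat p_T$ (from the ``$+1$'') is absorbed into $M^2K\lg(K)\hat p_T$ to yield the constant $3$ in the first term. With your inflated $N_p$ the fourth term would come out as $6M^3K\lg(K)\sqrt{\ln(2/\delta)}/(\sqrt 2-1)$ rather than the stated $2M^3K\lg(K)\sqrt{\ln(2/\delta)}/(\sqrt 2-1)$.
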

%
%
For large horizons, Lemma~\ref{lem:Tassumption} bounds some terms such as $\hat p_T$ and $(\hat p_T)^c$. When $c=1$, tighter bounds that are valid for any $T$ are used to prove Theorems~\ref{thm:generalepochsizeShort}(b) and \ref{thm:generalepochsizeShort}(c).

\begin{lemma}\label{lem:Tassumption} For any suboptimal matching $\pi$, let  \(P(\pi) : = \inf \{ p \in \N : 8 M \eps_{p} < \Delta(\pi)\}.\) The assumption $T\geq T_0(c)$ implies that for any matching~$\pi$,
$\Delta(\pi) 2^{P(\pi)^c} \leq \left( \frac{32M^2 \ln(2M^2KT^2)}{\Delta(\pi)}\right)^{1+\frac{1}{c}}$. Also, $2^c \leq 2\lg(\ln(T))$, $\hat p_T \leq 2 (\lg T)^{1/c}$ and $(\hat p_T)^c \leq e \lg T$.
\end{lemma}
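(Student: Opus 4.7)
Three claims; handle separately. \textbf{(Bounds on $\hat p_T$.)} Since each completed epoch $p$ spends $2^{p^c}$ rounds per candidate matching during exploration, $2^{\hat p_T^c}\leq T$, which at once yields $(\hat p_T)^c\leq\lg T\leq e\lg T$ and $\hat p_T\leq (\lg T)^{1/c}\leq 2(\lg T)^{1/c}$. \textbf{(Bound on $2^c$.)} Unpacking the definition of $T_0(c)$, the assumption $T\geq T_0(c)$ reads $\lg\ln T\geq c^c/\ln^c(1+\tfrac{1}{2c})$; using the elementary bound $\ln(1+\tfrac{1}{2c})\leq \tfrac{1}{2c}$, the right-hand side is at least $(2c^2)^c\geq 2^c$, so $2^c\leq \lg\ln T\leq 2\lg\ln T$.

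\textbf{(Main inequality.)} Set $B:=32 M^2\ln(2M^2 K T^2)$ and $A:=B/\Delta(\pi)^2$. From \eqref{def:deltaeps2new} the defining condition $8M\eps_p<\Delta(\pi)$ rewrites as $2^{p^c}>A$, so $P(\pi)$ is the smallest positive integer with $p^c>\lg A$. Hence $P(\pi)\leq (\lg A)^{1/c}+1$, giving
\begin{equation*}
P(\pi)^c \;\leq\; \bigl((\lg A)^{1/c}+1\bigr)^{c}\;=\;\lg A\cdot\bigl(1+(\lg A)^{-1/c}\bigr)^{c}.
\end{equation*}
The plan is to control the correction factor by $1+\tfrac{1}{2c}$. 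I would use $(1+y)^c\leq e^{cy}$ and check that $c(\lg A)^{-1/c}\leq \ln(1+\tfrac{1}{2c})$, equivalently $\lg A\geq c^c/\ln^c(1+\tfrac{1}{2c})$. Since utilities lie in $[0,1]$, $\Delta(\pi)\leq M$, hence $A\geq 32\ln(2M^2 K T^2)\geq 64\ln T$ and $\lg A\geq \lg\ln T$; the assumption $T\geq T_0(c)$ then delivers exactly what is needed. Consequently $P(\pi)^c\leq (1+\tfrac{1}{2c})\lg A$.

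To conclude, expand $\lg A=\lg B+2\lg(1/\Delta(\pi))$ and use $\lg B\geq 0$ (since $B>1$) to obtain
\begin{equation*}
P(\pi)^c\;\leq\;\bigl(1+\tfrac{1}{2c}\bigr)\lg B+\bigl(2+\tfrac{1}{c}\bigr)\lg\tfrac{1}{\Delta(\pi)}\;\leq\;\bigl(1+\tfrac{1}{c}\bigr)\lg B+\bigl(2+\tfrac{1}{c}\bigr)\lg\tfrac{1}{\Delta(\pi)};
\end{equation*}
exponentiating base $2$ and multiplying by $\Delta(\pi)$ yields $\Delta(\pi)\cdot 2^{P(\pi)^c}\leq (B/\Delta(\pi))^{1+1/c}$, which is the claimed inequality. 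The only delicate point is the calibration of the threshold $T_0(c)$: the $\tfrac{1}{2c}$ inside the logarithm in its definition is precisely what produces the factor $1+\tfrac{1}{2c}$ at the bound on $P(\pi)^c$, leaving just enough slack (through $\lg B\geq 0$) to absorb the asymmetry between the exponents of $B$ and $1/\Delta(\pi)$ and recover the clean final exponent $1+\tfrac{1}{c}$.
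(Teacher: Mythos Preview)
Your proof is correct and follows essentially the same route as the paper: both arguments reduce the main inequality to showing $P(\pi)^c \leq (1+\tfrac{1}{2c})\lg A$ via the bound $(1+y)^c\leq e^{cy}$ together with $\lg A\geq \lg\ln T \geq c^c/\ln^c(1+\tfrac{1}{2c})$, and then absorb the mismatch between the exponents using $B\geq 1$. Your treatment of the $\hat p_T$ bounds is in fact slightly cleaner than the paper's—since $2^{\hat p_T^c}\leq T$ already gives $\hat p_T^c\leq \lg T$ directly, you avoid the paper's detour through $\hat p_T\leq 1+(\lg T)^{1/c}$ and the subsequent appeal to $(\lg T)^{1/c}\geq c$.
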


Hence for $T\geq T_0(c)$, we can further upper bound the first three terms of the sum in Lemma~\ref{lem:initcomm} by
\begin{small}
\begin{align}
&6 M^2 K \lg(K) (\lg T)^{1/c}
+  e^2 M K  (\lg T)^{1+1/c} \notag\\&
+ \frac{2\sqrt{2}}{3-2\sqrt{2}}M^2K \sqrt{\ln(2/\delta)} \lg(\ln(T)).\label{nonunique}
\end{align}\end{small}It then remains to upper bound the regret incurred during exploration and exploitation phases. On $\cG_T$, during the exploitation phase the players are jointly pulling an optimal matching and no regret is incurred.
For an edge $(m,k)$, let $\widetilde{\Delta}_p^{m,k}\coloneqq U^\star - U(\widetilde{\pi}^{m,k}_p)$ be the gap of its associated matching at epoch $p$. During any epoch $p$, the incurred regret is then $\sum_{\pi \in \cC_p} \Delta(\pi) 2^{p^c} = \sum_{(m,k) \in \cE} \widetilde{\Delta}_p^{m,k} 2^{p^c}$.
	
Recall that $\pi^{m,k}$ is the best suboptimal matching assigning arm $k$ to player $m$.
Observe that for any epoch $p > P(\pi^{m,k})$, since $\cG_T$ happens, $\pi^{m,k}$ (and any worse matching) is not added to $\cC_p$; thus during any epoch $p> P(\pi^{m,k})$, the edge $(m,k)$ is either eliminated from the set of candidate edges, or it is contained in some optimal matching and satisfies $\widetilde{\Delta}_p^{m,k}=0$. 
	Hence, the total regret incurred during exploration phases is bounded by
	\begin{small}	 \begin{equation} \label{eq:exploregret}
	\sum_{(m,k) \in [M]\times[K]}\sum_{p=1}^{P(\pi^{m,k})} \widetilde{\Delta}_p^{m,k} 2^{p^c}.
	\end{equation}\end{small}The difficulty for bounding this sum is that  $\widetilde{\Delta}_p^{m,k}$  is random since $\tilde{\pi}_p^{m,k}$ is random.
However, 
$\widetilde{\Delta}_p^{m,k}$ can be related to $\Delta(\pi^{m,k})$ by $\widetilde{\Delta}_p^{m,k} \leq \frac{\epsilon_{p-1}}{\epsilon_{P(\pi^{m,k})}} \Delta(\pi^{m,k})$.
A convexity argument then allows us to bound the ratio $\frac{\epsilon_{p-1}}{\epsilon_{P(\pi^{m,k})}}$, which yields Lemma~\ref{lemma:exploelimm}, proved in Appendix~\ref{proof:exploelimm}.
\begin{lemma}\label{lemma:exploelimm}
For any edge $(m,k)$, if $p < P(\pi^{m,k})$ then 
$
\widetilde{\Delta}_p^{m,k} 2^{p^c} \leq \Delta(\pi^{m,k}) \frac{2^{P(\pi^{m,k})^c}}{\sqrt{2}^{P(\pi^{m,k})-(p+1)}}.
$
\end{lemma}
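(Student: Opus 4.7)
\textbf{Proof plan for Lemma~\ref{lemma:exploelimm}.} The plan has three steps: (i) bound $\widetilde{\Delta}_p^{m,k}$ by an expression depending on $\epsilon_{p-1}$; (ii) convert this into a multiple of $\Delta(\pi^{m,k})$ using the defining property of $P:=P(\pi^{m,k})$; and (iii) use convexity of $x\mapsto x^c$ to control the ratio $\epsilon_{p-1}/\epsilon_P$ and conclude.

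\emph{Step 1: crude deterministic bound on the estimated gap.} Since the lemma concerns the case $p<P$, in particular the edge $(m,k)$ is still a candidate edge at epoch $p$, so $\widetilde{\pi}_p^{m,k}$ was added to $\mathcal{C}_p$, meaning that it passed the elimination test $\sum_{n}(\widetilde{\mu}^n_{\pi_1(n)}-\widetilde{\mu}^n_{\widetilde{\pi}_p^{m,k}(n)})\leq 4M\epsilon_{p-1}$. On the good event $\mathcal{G}_T$ the estimated utilities used at the start of epoch $p$ were computed from data obtained by the end of epoch $p-1$, hence $|\tilde U_p(\pi)-U(\pi)|\leq 2M\epsilon_{p-1}$ for every candidate matching. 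Adding the error on the true optimal matching $\pi^\star$ (also a candidate) and the error on $\widetilde{\pi}_p^{m,k}$, together with the elimination test, I obtain
\[
\widetilde{\Delta}_p^{m,k}=U^\star-U(\widetilde{\pi}_p^{m,k})\leq 2M\epsilon_{p-1}+4M\epsilon_{p-1}+2M\epsilon_{p-1}=8M\epsilon_{p-1}.
\]

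\emph{Step 2: convert to a multiple of $\Delta(\pi^{m,k})$.} By definition of $P$, we have $8M\epsilon_P<\Delta(\pi^{m,k})$, equivalently $8M<\Delta(\pi^{m,k})/\epsilon_P$. Multiplying Step~1 by $\epsilon_{p-1}$ and dividing by $\epsilon_P$ yields the claimed intermediate inequality
\[
\widetilde{\Delta}_p^{m,k}\;\leq\;8M\epsilon_{p-1}\;<\;\frac{\epsilon_{p-1}}{\epsilon_P}\,\Delta(\pi^{m,k}).
\]

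\emph{Step 3: convexity of $x\mapsto x^c$.} From the definition $\epsilon_q=\sqrt{\ln(2/\delta)/2^{1+q^c}}$, one immediately has $\epsilon_{p-1}/\epsilon_P=2^{(P^c-(p-1)^c)/2}$. Multiplying by $2^{p^c}$, the target bound becomes
\[
2^{p^c}\cdot 2^{(P^c-(p-1)^c)/2}\;\leq\;2^{P^c}\cdot 2^{-(P-p-1)/2},
\]
i.e., after taking $\lg$ and rearranging, $P^c-2p^c+(p-1)^c\geq P-p-1$. Setting $g(x)=x^c$, convexity of $g$ on $[0,\infty)$ (for $c\geq 1$) gives the two one-sided estimates $g(P)-g(p)\geq(P-p)\,g'(p)$ and $g(p)-g(p-1)\leq g'(p)$, which combine into $g(P)-2g(p)+g(p-1)\geq(P-p-1)\,g'(p)$. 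Since $g'(p)=cp^{c-1}\geq 1$ for $c\geq 1$ and $p\geq 1$, this yields exactly $P^c-2p^c+(p-1)^c\geq P-p-1$. Plugging Step~3 into Step~2 delivers the lemma.

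\emph{Where the work really is.} Steps~1 and~2 are largely bookkeeping built on the good event $\mathcal{G}_T$ and the definition of $P$; the only genuinely analytic point is Step~3, where the non-linear exponent $p^c$ has to be handled. The natural risk is that a direct term-by-term comparison fails for $c>1$, and indeed the exponent identity is tight only for $c=1$. The key insight that rescues the general case is to interpret $P^c-2p^c+(p-1)^c$ as a discrete second difference of the convex function $g(x)=x^c$, which bounds it below by $(P-p-1)g'(p)$; the lower bound $g'(p)\geq 1$ for $c\geq1$ is then precisely what absorbs the loss from the convexity step and closes the inequality.
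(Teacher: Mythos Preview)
Your proof is correct and mirrors the paper's three-step structure: bound $\widetilde{\Delta}_p^{m,k}\le 8M\epsilon_{p-1}$, convert via $8M\epsilon_P<\Delta(\pi^{m,k})$, then control $\epsilon_{p-1}/\epsilon_P$ through convexity of $x\mapsto x^c$. Two minor remarks: the elimination threshold at epoch $p$ is $4M\epsilon_p$, not $4M\epsilon_{p-1}$ (harmless here since $\epsilon_p\le\epsilon_{p-1}$), and your tangent-line argument in Step~3 is a slightly cleaner variant of the paper's route, which instead combines the discrete second difference $(p+1)^c+(p-1)^c\ge 2p^c$ with $P^c-(p+1)^c\ge P-(p+1)$ to reach the same key inequality $P^c-2p^c+(p-1)^c\ge P-p-1$.
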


\begin{figure*}
	\begin{center}
		\hspace{-0.5cm}
		\begin{tabular} {c c}	
			\includegraphics[scale=0.85]{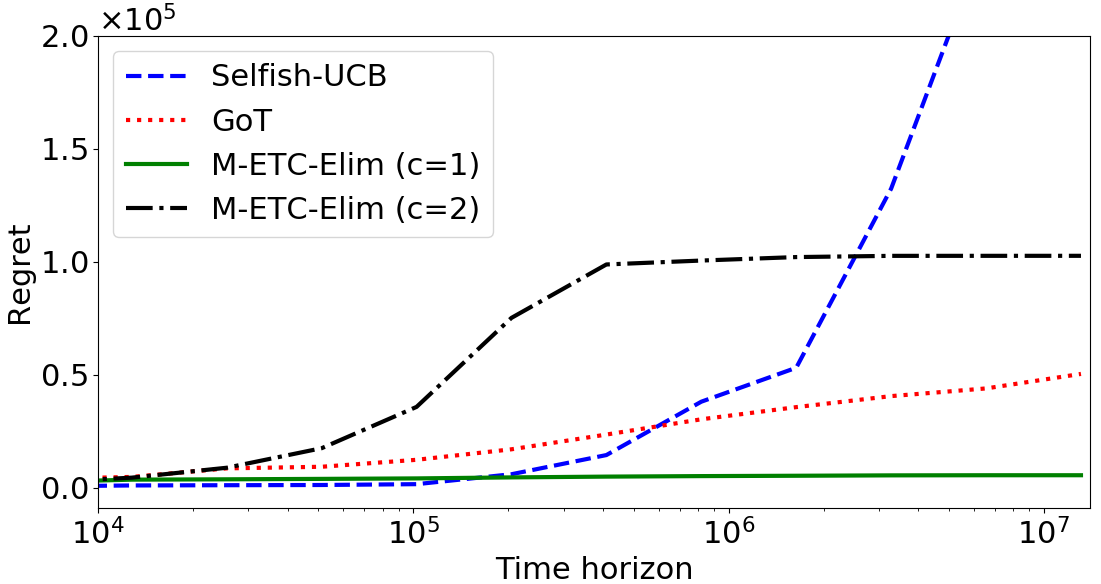}
			&
			\includegraphics[scale=0.85]{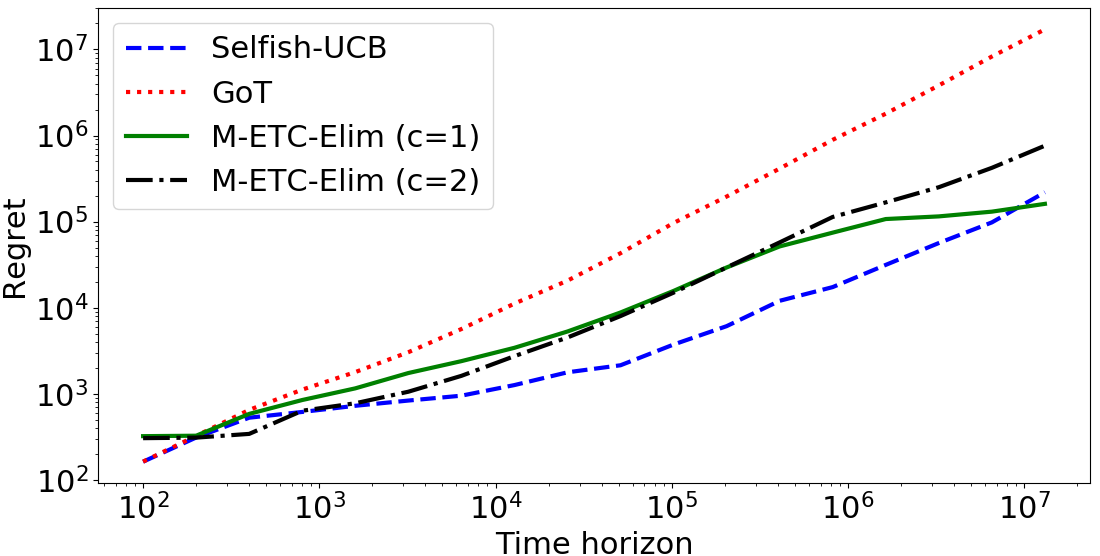}
		\end{tabular}
	\end{center}
	\caption{$R_T$ as a function of $T$ with reward matrices $U_1$ (left)
		and $U_2$ (right) and Bernoulli rewards. 
	}
	\label{expmain}	
\end{figure*}

By Lemma~\ref{lemma:exploelimm}, $\sum_{p=1}^{P(\pi^{m,k})} \widetilde{\Delta}_p^{m,k} 2^{p^c}$ is upper bounded by $
\Big({{\sum_{p=0}^{\infty}} } 1/\sqrt{2}^{p} \Big) \Delta(\pi^{m,k}) 2^{P(\pi^{m,k})^c} + \widetilde{\Delta}^{m,k}_{P(\pi^{m,k})} 2^{P(\pi^{m,k})^c}$.
As $\widetilde{\pi}^{m,k}_{P(\pi^{m,k})}$ is either optimal or its gap is larger than $\Delta(\pi^{m,k})$, Lemma~\ref{lem:Tassumption} yields \begin{small}\[\widetilde{\Delta}^{m,k}_{P(\pi^{m,k})} 2^{P(\pi^{m,k})^c} \leq \left(
\frac{32 M^2 \ln(2M^2KT^2)}{\Delta(\pi^{m,k})} \right)^{1+1/c}\] \end{small} in both cases. Therefore, we find that
\begin{small}
\begin{align*}
{{\sum_{p=1}^{P(\pi^{m,k})}} } \widetilde{\Delta}_p^{m,k} 2^{p^c} 
\leq \frac{2\sqrt{2}-1}{\sqrt{2}-1} \left(\frac{32 M^2 \ln(2M^2KT^2)}{\Delta(\pi^{m,k})} \right)^{1+1/c}.
\end{align*}
\end{small}Plugging this bound in \eqref{eq:exploregret}, the bound \eqref{nonunique} in Lemma~\ref{lem:initcomm} and summing up all terms yields Theorem~\ref{thm:generalepochsize}.

\subsection{Proof of Theorem~\ref{thm:generalepochsizeShort}(b), Unique Optimal Matching }
The reader may wonder why can we obtain a better (logarithmic) bound if the maximum matching is unique. The intuition is as follows: in the presence of a unique optimal matching, \NameSecond\ eventually enters the exploitation phase (which does not happen with multiple optimal matchings), and we can therefore provide a tighter bound on the number of epochs before exploitation phase compared with the one provided by Lemma~\ref{lem:Tassumption}. More precisely, in that case 
we have
\begin{small}\( \hat{p}_T \leq \lg \left( 64M^2 \Delta^{-2}\ln(2M^2KT^2)\right).\)\end{small}
Moreover, another bound given by Lemma~\ref{lem:Tassumption} can be tightened when $c=1$ regardless of whether the optimal matching is unique or not:
\begin{small}
\(\Delta(\pi) 2^{P(\pi)} \leq 64M^2 \ln(2M^2KT^2)/\Delta(\pi).\)\end{small}These two inequalities lead to Theorem~\ref{thm:generalepochsizeShort}(b), proved in Appendix~\ref{app:uniquematching}.

\subsection{Proof of Theorem~\ref{thm:generalepochsizeShort}(c), Minimax Regret Bound} 
Using the definition of the elimination rule, on $\cG_T$ we have $\widetilde{\Delta}_p^{m,k} \leq 8M \epsilon_{p-1}$. Directly summing over these terms for all epochs yields an exploration regret scaling with $\sum_{m,k} \sqrt{t_{m,k}}$, where $t_{m,k}$ roughly corresponds to the number of exploration rounds associated with edge $(m,k)$. This regret is maximized when all $t_{m,k}$ are equal, which leads to the sublinear regret bound of Theorem~\ref{thm:generalepochsizeShort}(c). See Appendix \ref{app:minmaxbound} for the rigorous statement and proof.


\section{Numerical Experiments}\label{sec:experiments}

We executed the following algorithms:\footnote{The source codes are included in the supplementary material.} 
M-ETC-Elim with $c=1$ and $c=2$,
GoT (the latest version in~\cite{got_arxiv}) with parameters\footnote{These parameters and the reward matrix $U_1$ are taken from the simulations section of~\cite{got_arxiv}.} $\delta=0,\eps = 0.01, c_1=500, c_2=c_3=6000$
and Selfish-UCB, a heuristic studied by \cite{emily_multiplayer,BoursierPerchet18} in the homogeneous setting which often performs surprisingly well despite the lack of theoretical evidence. In Selfish-UCB, each player runs the UCB1 algorithm of \cite{Aueral02} on the reward sequence $(R^m(t))_{t=1}^{\infty}$.\footnote{Note that this sequence is \emph{not} i.i.d.\ due to some observed zeros that are due to collisions.}
We experiment with Bernoulli rewards and the following reward matrices, whose entry $(m,k)$ gives the value of $\mu^m_k$:
\[
\tiny
U_1 = \begin{pmatrix}
0.1 & 0.05 & 0.9 \\
0.1 & 0.25 & 0.3 \\
0.4 & 0.2 & 0.8
\end{pmatrix},
U_2=\begin{pmatrix}
0.5& 		0.49 &0.39& 	0.29& 0.5\\
0.5& 		0.49 &0.39& 	0.29& 0.19\\
0.29& 	0.19 &0.5 	&	0.499& 0.39\\
0.29& 	0.49& 0.5 &		0.5  &  0.39\\
0.49& 	0.49& 0.49& 	0.49& 0.5
\end{pmatrix}.\]
Figure~\ref{expmain} reports the algorithms' {regrets} for various time horizons $T$, averaged over 100 independent replications. 
The first instance (matrix $U_1$, left plot) has a unique optimal matching and we observe that M-ETC-Elim has logarithmic regret (as promised by Theorem~\ref{thm:main}) and largely outperforms all competitors.
The second instance (matrix $U_2$, right plot) is more challenging, with more arms and players, two optimal matchings and several near-optimal matchings.
M-ETC-Elim with $c=1$ performs the best for large $T$ as well, though Selfish-UCB is also competitive.
Yet there is very little theoretical understanding of Selfish-UCB, and it fails badly on the other instance.
Appendix~\ref{sec:ExpesPlus} contains 
additional experiments corroborating our findings, where we also discuss practical  aspects  of implementing \NameSecond{}.

\section{Conclusion}
\label{sec:conclude}
We have presented
a practical algorithm for the heterogeneous multiplayer multi-armed bandit problem, which can be used in the presence of either unique or multiple optimal matchings and get a nearly logarithmic regret in both cases, as well as a sublinear regret in the worst case.
\NameSecond{} crucially relies on the assumption that the collision indicators are observed in each round. 
In future work, we aim to find algorithms with logarithmic regret in the setting when the players observe their rewards $R^m(t)$ only. So far, such algorithms have been proposed only in the homogeneous setting, see \citep{multiplayer_bandits_gabor_abbas,BoursierPerchet18}.     



\bibliography{biblioBandits}

\clearpage
\onecolumn
\appendix

\section{Description of the Initialization Procedure and Followers' Pseudocode}\label{sec:Init}

The pseudocode of the $\textnormal{\sc Init}(K,\delta_0)$ procedure, first introduced by \cite{BoursierPerchet18}, is presented in Algorithm~\ref{alg:init} for the sake of completeness. We now provide a proof of Lemma~\ref{lem:init}.

\begin{algorithm}[h]
	\DontPrintSemicolon
	\caption{{\sc Init}, the initialization algorithm\label{alg:init}}
	\KwIn{number of arms $K$, failure probability $\delta_0$}
	\KwOut{Ranking $R$, number of players $M$}
	\tcp{first, occupy a distinct arm using the musical chairs algorithm}
	$k \longleftarrow 0$\;
	\For(\tcp*[f]{rounds $1,\dots,T_0$}){$T_0 \coloneqq K \ln (K/\delta_0) $ rounds} {
		\eIf{$k=0$}
		{pull a uniformly random arm $i\in[K]$\;
			\lIf(\tcp*[f]{arm $k$ is occupied}){no collision occurred}{$k\longleftarrow i$}	
		}
		{pull arm $k$\;}
	}
	\tcp{next, learn $M$ and identify your ranking}
	$R \longleftarrow 1$\;
	$M \longleftarrow 1$\;
	\For(\tcp*[f]{rounds $T_0+1,\dots,T_0+2k-2 $}){$2k-2$ rounds}
	{pull arm $k$\;
		\If{collision occurred}
		{$R\longleftarrow R+1$\;{$M\longleftarrow M+1$}}
	}
	\For(\tcp*[f]{rounds $T_0+2k-1,\dots,T_0+K+k-2 $}){$i=1,2,\dots,K-k$}{pull arm $k+i$\;
		\If{collision occurred}
		{$M\longleftarrow M+1$}
	}
	\For(\tcp*[f]{rounds $T_0+K+k-1,\dots,T_0+2K-2 $}){$K-k$ rounds}{pull arm 1}
\end{algorithm}

Let $T_0\coloneqq K \ln(K/\delta_0)$.
During the first $T_0$ rounds, each player tries to occupy a distinct arm using the so-called musical chairs algorithm, first introduced in~\cite{musicalchair}:
she repeatedly pulls a random arm until she gets no collision, and then sticks to that arm.
We claim that after $T_0$ rounds, with probability $1-\delta_0$ all players have succeeded in occupying some arm.
Indeed, the probability that a given player $\mathcal A$, who has not occupied an arm so far, does not succeed in the next round is at most $1-1/K$, since there exists at least one arm that is not pulled in that round, and this arm is chosen by $\mathcal A$ with probability $1/K$.
Hence, the probability that $\mathcal A$ does not succeed in occupying an arm during these $T_0$ rounds is not more than
$$(1-1/K)^{T_0} < \exp(-T_0/K) = \delta_0/K \leq \delta_0/M,$$
and a union bound over the $M$ players proves the claim.

Once each player has occupied some arm, the next goal is to determine the number of players and their rankings.
This part of the procedure is deterministic.
The players' rankings will be determined by the indices of the arms they have occupied: a player with a smaller index will have a smaller ranking.
To implement this, a player that has occupied arm $k\in[K]$
will pull this arm for $2k-2$ more rounds (the waiting period), and will then sweep through the arms $k+1,k+2,\dots,K$, and can learn the number of players who have occupied arms in this range by counting the number of collisions she gets.
Moreover, she can learn the number of players occupying arms $1,\dots,k-1$ by counting the collisions during the waiting period;
see Algorithm~\ref{alg:init} for details.
The crucial observation to verify   the correctness of the algorithm is that two players occupying arms $k_1$ and $k_2$ will collide exactly once, and that happens at round $T_0 + k_1 + k_2 - 2$.

Next, we present the
pseudocode that the followers execute in \NameSecond{}.
Recall that \texttt{(comm.)} refers to a call to the communication protocol.

\label{sec:elimfollower}
\begin{procedure}[h]
	\DontPrintSemicolon
	\caption{FollowerAlgorithm(R,M) for the \NameSecond{} algorithm with parameter $c$}
	\KwIn{Ranking $R$, number of players $M$}
	\For{$p=1,2,\dots$} {
		Receive the value of $\operatorname{size}(\cC)$\tcp*[r]{(comm.)}
		\For{$i=1,2,\dots,\operatorname{size}(\cC)$} {
			Receive the arm assigned to this player in $\cC[i]$ \tcp*[r]{(comm.)}
		}
		Receive the communication arm of the leader and of this player\;
		\If{$size(\cC)=1$\tcp*[r]{(enter exploitation phase)}} {pull for the rest of the game the arm assigned to this player in the unique matching in $\cC$ \;}
		\For{$i=1,2,\dots,\operatorname{size}(\cC)$} {
			pull $2^{p^c}$ times the arm assigned to this player in the matching $\cC[i]$ \;
		}
		\For{$k=1,2,\dots,K$}{
			$\widehat{\mu}_k^R\longleftarrow$ empirically estimated  utility of arm $k$ if arm $k$ has been pulled in this epoch, 0 otherwise\;
			Truncate $\widehat{\mu}_k^R$ to
			$\widetilde{\mu}_k^R$ using the $\frac{p^c+1}{2}$ most significant bits\;
		}
		Send the values $\widetilde{\mu}_1^R,\widetilde{\mu}_2^R,\dots,\widetilde{\mu}_K^R$ to the leader \tcp*[r]{(comm.)}
	}
\end{procedure}


\section{Practical Considerations and Additional Experiments}\label{sec:ExpesPlus}

\subsection{Implementation Enhancements for M-ETC-Elim}\label{sec:tweaks}
In the implementation of M-ETC-Elim, the following enhancements significantly improve the regret  in practice (and have been used for the reported numerical experiments), but only by constant factors in theory, hence we have not included them in the analysis for the sake of brevity.

First, to estimate the means, the players are better off taking into account all pulls of the arms, rather than just the last epoch.
Note that after the exploration phase of epoch $p$, each candidate edge has been pulled $N_p \coloneqq \sum_{i=1}^{p} 2^{i^c}$ times.	Thus, with probability at least $1 - 2 \lg(T) / (MT)$, each edge has been estimated within additive error $\leq \eps'_p = \sqrt{\ln(M^2TK)/2N_p}$ by Hoeffding's inequality. The players then  truncate these estimates using $b \coloneqq \lceil - \lg (0.1 \eps'_p) \rceil$ bits,
adding up to $0.1 \eps'_p$ additive error due to quantization.
They then send these $b$ bits to the leader.
Now, the threshold for eliminating a matching would be
$2.2 M\eps'_p$ rather than
$4 M \times \sqrt{\ln(2M^2KT^2)/2^{1+p^c}}$
(compare with line~\ref{gapdefined} of the LeaderAlgorithm presented on page~\pageref{gapdefined}).

The second enhancement is to choose the set $\mathcal{C}$ of matchings to explore more carefully.
Say that a matching is \emph{good} if its estimated gap is at most $2.2 M\eps'_p$, and say an edge is \emph{candidate} (lies in $\cE$) if it is part of some good matching.
There are at most $MK$ candidate edges, and we need only estimate those in the next epoch.
Now, for each candidate edge, we can choose any good matching containing it, and add that to $\mathcal{C}$.
This guarantees that $|\mathcal{C}|\leq MK$, which gives the bound in Theorem~\ref{thm:main}.
But to reduce the size of $\mathcal{C}$ in practice, we do the following: initially, all edges are candidate. After  each exploration phase, we do the following: we mark all edges as \emph{uncovered}. For each candidate uncovered edge $e$, we compute the maximum matching $\pi'$ containing $e$ (using estimated means).
If this matching $\pi'$ has gap larger than $2.2 M \eps'_p$, then it is not good hence we remove $e$ from the set of candidate edges.
Otherwise, we add $\pi'$ to $\mathcal{C}$, and moreover, we mark all of its edges as \emph{covered}.
We then look at the next uncovered candidate edge, and continue similarly, until all candidate edges are covered.
This guarantees that all the candidate edges are explored, while the number of explored matchings could be much smaller than the number of candidate edges, which results in faster exploration and a smaller regret in practice.

To reduce the size of $\mathcal{C}$ even further, we do the following after each exploration phase:
first, find the maximum matching (using estimated means), add it to $\mathcal C$, mark all its edges as covered, and only then start looking for uncovered candidate edges as explained above.

\subsection{Other Reward Distributions}
In our model and analysis, we have assumed $Y^m_{k,t} \in [0,1]$ for simplicity (this is a standard assumption in online learning), but it is immediate to generalize the algorithm and its analysis to reward distributions bounded in any known interval via a linear transformation.
Also, we can adapt our algorithm and analysis to subgaussian distributions with mean lying in a known interval. A random variable $X$ is $\sigma$-subgaussian if for all $\lambda\in\R$ we have   $\bE[e^{\lambda (X-\bE X)}] \leq e^{\sigma^2 \lambda^2/2}$. This includes Gaussian distributions and distributions with bounded support. Suppose for simplicity that the means lie in $[0,1]$.
Then the algorithm need only change in two places:
first, when the followers are sending the estimated means to the leader, they must send 0 and 1 if the empirically estimated mean is $<0$ and $>1$, respectively.
Second, the definition of $\eps_p$ must be changed to
$\eps_p \coloneqq \sqrt{\sigma^2\ln(2/\delta)/2^{p^c-1}}$.
The only change in the analysis is that instead of using Hoeffding's inequality which requires a bounded distribution, one has to use 
a concentration inequality for sums of subgaussian distributions, see, e.g., \cite[Proposition~2.5]{wainwright2019high}.

We executed the same algorithms as in Section~\ref{sec:experiments} with the same reward matrices  but with Gaussian rewards with variance 0.05. The results are somewhat similar to the Bernoulli case and can be found in Figure~\ref{expapp}. 

\begin{figure}[h]
		\begin{center}
			\includegraphics[scale=0.82]{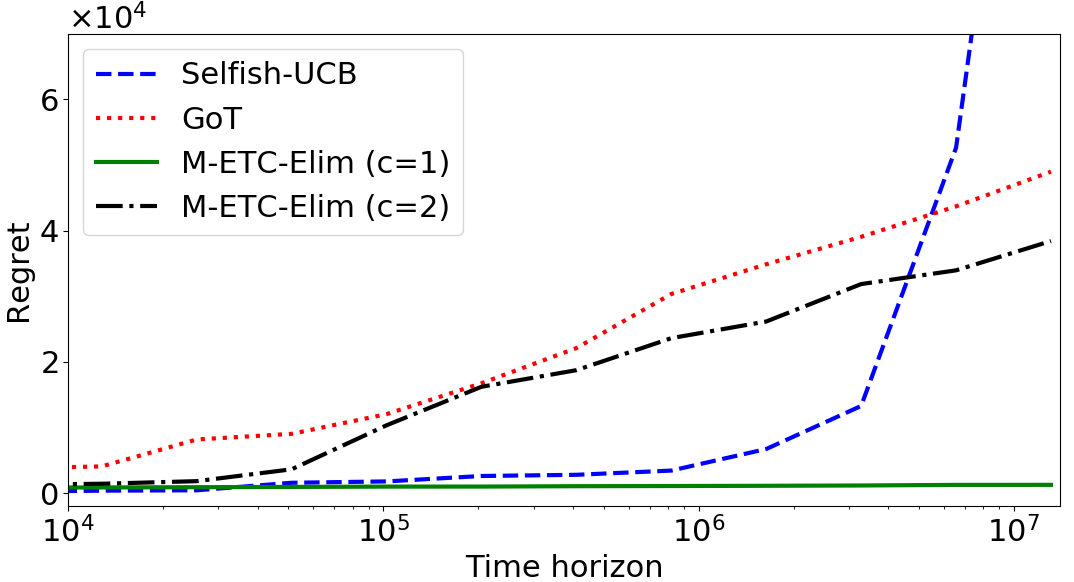}
			\includegraphics[scale=0.82]{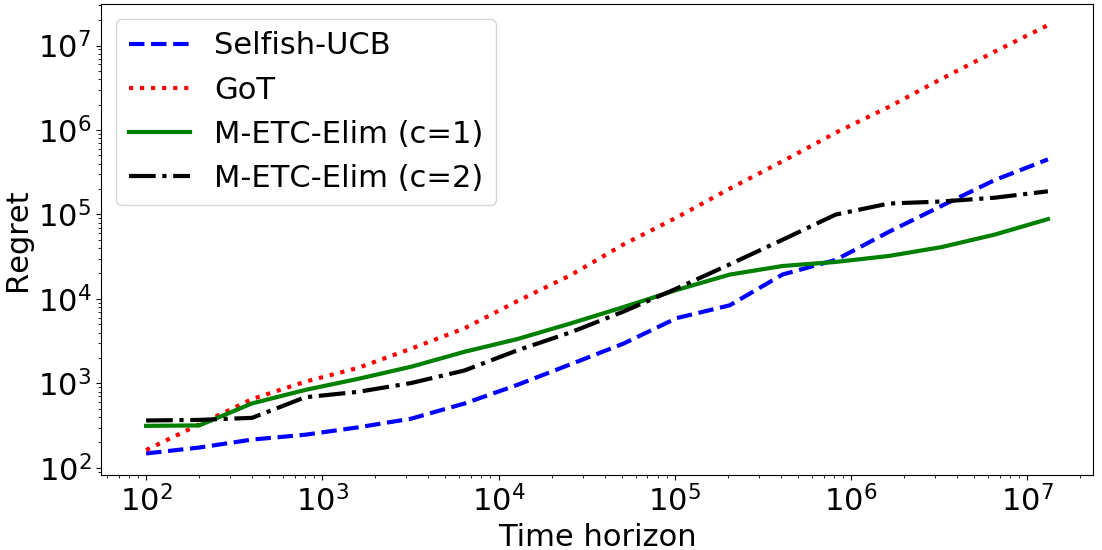}
		\end{center}
		\caption{Numerical comparison of M-ETC-Elim, GoT and Selfish-UCB on reward matrices $U_1$ (left)
			and $U_2$ (right) with Gaussian rewards and variance 0.05. The x-axis has logarithmic scale in both plots.
			The y-axis has logarithmic scale in the right plot.
		}
		\label{expapp}	
\end{figure}

The reason we performed these Gaussian experiments is to have a more fair comparison against GoT. Indeed, the numerical experiments of \cite{got_arxiv} rely on the same reward matrix $U_1$ and Gaussian rewards.

\section{Regret Analysis in the Presence of a Unique Maximum Matching}\label{app:uniquematching}
In Theorem~\ref{thm:cyclethroughmatchings} below we provide a refined analysis of \NameSecond{} with parameter $c=1$ if the maximum matching is unique, justifying the $O(\frac{KM^3}{\Delta}\ln(T))$ regret upper bound stated in Theorem~\ref{thm:generalepochsizeShort}(b). Its proof, given below, follows essentially the same line as the finite-time analysis given in Section~\ref{sec:ProofMain}, except for the last part.
Recall that $\ln(\cdot)$ denotes the natural logarithm and $\lg(\cdot)$ denotes logarithm in base 2.

\begin{theorem}\label{thm:cyclethroughmatchings} If the maximum matching is unique, for any $T>0$ the regret of the \NameSecond{} algorithm with parameter $c=1$ is upper bounded by
\begin{small}
	\begin{align*}
	&2+MK \ln (e^2 K^2 T)
	 +   3M^2 K  \lg(K) \lg \left( \frac{64M^2 \ln(2M^2KT^2)}{\Delta^2} \right) 
	+ M K \lg^2 \left( \frac{64M^2 \ln(2M^2KT^2)}{\Delta^2} \right)
	\\&+ \frac{4\sqrt{2}-2}{3-2\sqrt{2}} M^3K \lg (K)\sqrt{\ln(2M^2 K T^2)} + \frac{2\sqrt{2}-1}{\sqrt{2}-1}\!\!\!\!\!\!\!\!\!\! \mathlarger{\mathlarger{\sum_{(m,k) \in [M] \times [K]}}}\!\!\!\!\!\!\!\!\!\!
	\frac{64 M^2 \ln (2M^2KT^2)}{\Delta(\pi^{m,k})}. 
	\end{align*}
\end{small}
\end{theorem}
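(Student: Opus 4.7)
My plan is to adapt the proof of Theorem~\ref{thm:generalepochsize} by exploiting two simplifications specific to the unique-optimal-matching case with $c=1$. As in that proof, I first condition on the good event $\cG_T$---by Lemma~\ref{lem:conc} the complement contributes at most $2$ to the expected regret---and I bound the regret of the initialization phase by $MK\ln(e^2 K^2 T)$ via Lemma~\ref{lem:init}. The rest of the argument takes place on $\cG_T$.

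The first and essentially new ingredient is a sharpened bound on $\hat p_T$. Let $\pi^\star$ denote the unique optimal matching and set $P^\star\coloneqq \lceil \lg(64M^2\ln(2/\delta)/\Delta^2)\rceil$. Since $c=1$, the definition of $\eps_p$ yields $8M\eps_{P^\star}<\Delta$. On $\cG_T$, for every edge $(m,k)\notin\pi^\star$, the best associated suboptimal matching has gap at least $\Delta$, so the estimated gap of $\widetilde{\pi}_{P^\star}^{m,k}$ exceeds $4M\eps_{P^\star}$, and $(m,k)$ is therefore removed from $\cE$ at the end of epoch $P^\star$. For each $(m,k)\in\pi^\star$, the associated maximum matching is $\pi^\star$ itself, so the candidate list satisfies $\cC=\{\pi^\star\}$ at the start of epoch $P^\star+1$ and the exploitation phase is triggered. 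Hence $\hat p_T\leq P^\star\leq \lg(64M^2\ln(2/\delta)/\Delta^2)$, strictly better than the generic $(\lg T)^{1/c}$-type bound used in Lemma~\ref{lem:Tassumption}.

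Substituting this sharpened $\hat p_T$ into Lemma~\ref{lem:initcomm} with $c=1$ immediately gives the second summand $3M^2K\lg(K)\lg(64M^2\ln(2/\delta)/\Delta^2)$ and, since $(\hat p_T)^{c+1}=\hat p_T^{\,2}$, the third summand $MK\lg^2(64M^2\ln(2/\delta)/\Delta^2)$. The two remaining summands of Lemma~\ref{lem:initcomm} both carry a $\sqrt{\ln(2/\delta)}$ factor; combining them using the trivial bound $M^2K\leq M^3K\lg(K)$, a short computation shows their combined coefficient equals $\tfrac{2\sqrt{2}}{3-2\sqrt{2}}+\tfrac{2}{\sqrt{2}-1}=\tfrac{4\sqrt{2}-2}{3-2\sqrt{2}}$, producing the fourth summand of the theorem. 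No additional regret accrues during the exploitation phase on $\cG_T$.

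For the exploration regret, I reuse the decomposition from the proof of Theorem~\ref{thm:generalepochsize}: Lemma~\ref{lemma:exploelimm} handles the epochs $p<P(\pi^{m,k})$ via a geometric series bounded by $\tfrac{\sqrt{2}}{\sqrt{2}-1}\,\Delta(\pi^{m,k})\,2^{P(\pi^{m,k})}$, while the boundary term $\widetilde\Delta_{P(\pi^{m,k})}^{m,k}\,2^{P(\pi^{m,k})}$ is bounded using $\widetilde\Delta_{P(\pi^{m,k})}^{m,k}\leq 8M\eps_{P(\pi^{m,k})-1}$, which holds on $\cG_T$ because $(m,k)$ survived the elimination at the end of epoch $P(\pi^{m,k})-1$. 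The crucial improvement over Theorem~\ref{thm:generalepochsize} is that for $c=1$ the inequality $\Delta(\pi)\,2^{P(\pi)}\leq 64M^2\ln(2/\delta)/\Delta(\pi)$ follows directly from the definition of $P(\pi)$ and does \emph{not} require $T\geq T_0(c)$. Plugging this in yields a per-edge exploration regret of $\tfrac{2\sqrt{2}-1}{\sqrt{2}-1}\cdot \tfrac{64M^2\ln(2/\delta)}{\Delta(\pi^{m,k})}$; summing over $(m,k)\in[M]\times[K]$ gives the last summand and completes the bound. The only real technical hurdle is justifying the sharper $\hat p_T$ bound rigorously (i.e.\ the claim that uniqueness of $\pi^\star$ forces $\cC$ to become a singleton by epoch $P^\star$); once that is in place, the rest is a routine tightening of constants in the existing machinery.
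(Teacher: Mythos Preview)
Your proposal is correct and follows essentially the same route as the paper: condition on $\cG_T$, exploit uniqueness to bound $\hat p_T$ by $P(\pi')\leq \lg(64M^2\ln(2/\delta)/\Delta^2)$ (the paper phrases this via the second-best matching $\pi'$ rather than defining a separate $P^\star$), plug this into Lemma~\ref{lem:initcomm} and merge the two $\sqrt{\ln(2/\delta)}$ terms via $M^2K\leq M^3K\lg K$, and then bound the exploration regret exactly as in Section~\ref{sec:ProofMain} using the elementary inequality $\Delta(\pi)2^{P(\pi)}\leq 64M^2\ln(2/\delta)/\Delta(\pi)$ that holds for all $T$ when $c=1$. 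The only cosmetic discrepancy is that the elimination you describe ``at the end of epoch $P^\star$'' actually takes place at the start of epoch $P^\star+1$ with threshold $4M\eps_{P^\star+1}$; since $\eps_{P^\star+1}<\eps_{P^\star}$ this does not affect your conclusion.
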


\begin{proof} The good event and the regret incurred during the initialization phase are the same as in the finite-time analysis given in Section~\ref{sec:ProofMain}. 
Recall the definition of $P$, which is
\(P(\pi) = \inf \{ p \in \N : 8 M \eps_{p} < \Delta(\pi)\}.\)
When there is a unique optimal matching, if the good event happens, the \NameSecond{} algorithm will eventually enter the exploitation phase, so $\hat p_T$ can be much smaller than the crude upper bound given by Lemma~\ref{lem:Tassumption}. Specifically, introducing $\pi'$ as the second maximum matching so that $\Delta(\pi') = \Delta$, we have, on the event $\cG_T$, 
\[\hat{p}_T \leq P(\pi') \leq \lg \left( \frac{64M^2 \ln(2M^2KT^2)}{\Delta^2} \right).\]

Plugging this bound in Lemma~\ref{lem:initcomm} yields that the regret incurred during communications is bounded by
\begin{small}\begin{align*}
3M^2 K\lg(K)\lg\left(\frac{64M^2 \ln(2M^2KT^2)}{\Delta^2}\right) + MK\lg^2\left(\frac{64M^2 \ln(2M^2KT^2)}{\Delta^2}\right) \\ + \frac{2M^3K\lg K}{\sqrt{2}-1}\sqrt{\ln(2/\delta)} + \frac{2\sqrt{2}}{3-2\sqrt{2}} M^2K \sqrt{\ln(2/\delta)}.
\end{align*}\end{small}
Also, for $c=1$ and any matching $\pi$,
the definition of $\eps_p$ in \eqref{def:deltaeps2new} gives
\begin{equation*}
P(\pi) \leq 
1 +  \lg \left( \frac{32 M^2 \ln (2M^2KT^2)}{\Delta(\pi)^2} \right).
\end{equation*}
In particular, $\Delta(\pi) 2^{P(\pi)} \leq \frac{64 M^2 \ln (2M^2KT^2)}{\Delta(\pi)}$. Using the same argument as in Section~\ref{sec:ProofMain}, the regret incurred during the exploration phases is bounded by
\begin{small}\begin{equation*}
\frac{2\sqrt{2}-1}{\sqrt{2}-1}\!\!\!\!\!\!\!\!\!\! \mathlarger{\mathlarger{\sum_{(m,k) \in [M] \times [K]}}}\!\!\!\!\!\!\!\!\!\!
	\frac{64 M^2 \ln (2M^2KT^2)}{\Delta(\pi^{m,k})}.\end{equation*}\end{small}Summing up the regret bounds for all phases proves Theorem~\ref{thm:cyclethroughmatchings}.\end{proof}
\section{Minimax Regret Analysis}\label{app:minmaxbound}
In Theorem~\ref{thm:minmaxbound} below we provide a minimax regret bound for \NameSecond{} with parameter $c=1$, justifying the $O\left(M^{\frac{3}{2}}\sqrt{KT\ln(T)}\right)$ regret upper bound stated in Theorem~\ref{thm:generalepochsizeShort}(c).

\begin{theorem}\label{thm:minmaxbound} For all $T$, the regret of the \NameSecond{} algorithm with parameter $c=1$ is upper bounded by
\begin{small}
	\begin{align*}
	&2+MK \ln (e^2 K^2 T)
	 +   3M^2 K  \lg(K) \lg \left(T \right) 
	+ M K \lg^2 \left( T \right)
	\\&+ \frac{4\sqrt{2}-2}{3-2\sqrt{2}} M^3K \lg (K)\sqrt{\ln(2M^2 K T^2)} + \frac{8}{\sqrt{2}-1} K^{\frac{1}{2}}M^{\frac{3}{2}} \sqrt{T\ln (2M^2KT^2)}. 
	\end{align*}
\end{small}
\end{theorem}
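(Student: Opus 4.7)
The proof plan is to reuse the decomposition of $R_T$ from the proof of Theorem~\ref{thm:generalepochsize}, writing it as the sum of the contributions from the complement of the good event $\cG_T$, the initialization phase, the communication phases, the exploration phases, and any exploitation phase. Four of these pieces require no new work: Lemma~\ref{lem:conc} bounds the $\overline{\cG_T}$ contribution by $MT\cdot 2/(MT)=2$, Lemma~\ref{lem:init} caps the initialization regret at $MK\ln(e^2 K^2 T)$, an exploitation phase on $\cG_T$ incurs zero regret, and plugging $c=1$ together with the trivial bound $\hat p_T\leq \lg T$ into Lemma~\ref{lem:initcomm} produces exactly the four explicit non-exploration terms on the right-hand side of the theorem statement (the two $\sqrt{\ln(2M^2KT^2)}$ contributions combine into a single $\frac{4\sqrt 2-2}{3-2\sqrt 2}M^3K\lg(K)$ coefficient). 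So all the novelty lies in bounding the exploration regret on $\cG_T$.

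The new ingredient is the pointwise inequality
\[\Delta(\pi)\;\leq\;8M\eps_{p-1}\qquad\text{for every }p\text{ and every }\pi\in\cC_p,\]
valid on $\cG_T$. I would prove it by chaining four estimates: (i) the upper concentration $U^\star\leq \tilde U_{p-1}(\pi^\star)+2M\eps_{p-1}$, which requires that every edge of a fixed optimal matching is present in $\cE_{p-1}$ (so that $\pi^\star$ is fully explored in the previous epoch); (ii) the maximality of $\pi_1$ giving $\tilde U_{p-1}(\pi^\star)\leq \tilde U_{p-1}(\pi_1)$; (iii) the elimination condition on line~\ref{gapdefined} forcing $\tilde U_{p-1}(\pi_1)-\tilde U_{p-1}(\pi)\leq 4M\eps_p$ for $\pi\in\cC_p$; and (iv) the lower concentration $\tilde U_{p-1}(\pi)\leq U(\pi)+2M\eps_{p-1}$ directly from \eqref{def:GoodEvent}. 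Summing and using $\eps_p\leq \eps_{p-1}$ yields the claim. For $c=1$ one has $\eps_{p-1}=\sqrt{\ln(2/\delta)/2^p}$, so the exploration regret incurred during epoch $p$ is at most
\[\sum_{\pi\in\cC_p}\Delta(\pi)\cdot 2^p\;\leq\;8M\sqrt{\ln(2/\delta)}\,|\cC_p|\sqrt{2^p}.\]

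It remains to bound $S\coloneqq\sum_{p=1}^{\hat p_T}|\cC_p|\sqrt{2^p}$ subject to the two structural constraints $|\cC_p|\leq MK$ (since $|\cC_p|\leq|\cE_p|\leq MK$) and $\sum_p|\cC_p|\cdot 2^p\leq T$ (the total number of exploration rounds is at most $T$). The key observation is that the efficiency $\sqrt{2^p}/2^p=2^{-p/2}$ is decreasing in $p$, so a waterfilling argument shows $S$ is maximized by saturating $|\cC_p|=MK$ for the smallest epochs. Summing the resulting geometric series with ratio $\sqrt 2$ and using $\sqrt{a+b}\leq\sqrt a+\sqrt b$ yields
\[S\;\leq\;\frac{\sqrt{MKT}}{\sqrt 2-1}+\frac{\sqrt 2\,MK}{\sqrt 2-1},\]
which, after multiplication by $8M\sqrt{\ln(2M^2KT^2)}$ and simplification, produces the claimed leading term $\frac{8}{\sqrt 2-1}M^{3/2}\sqrt{KT\ln(2M^2KT^2)}$; the residual $O(M^2K\sqrt{\ln(2M^2KT^2)})$ is absorbed into the communication constants.

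The main obstacle I foresee is step (i): proving that the optimal matching is fully explored in every epoch on $\cG_T$, so that the concentration bound from \eqref{def:GoodEvent} may legitimately be applied to $\pi^\star$ even though the event is stated only for $\pi\in\cC_{p+1}$. This requires a short induction on $p$: if all edges of $\pi^\star$ lie in $\cE_p$, then $\pi^\star$ is accurately estimated, and the elimination rule, combined with $\Delta(\pi^\star)=0$, prevents any of its edges from being removed when passing to $\cE_{p+1}$. Once this invariant is in place, the remainder of the argument is the waterfilling computation and routine arithmetic to combine constants.
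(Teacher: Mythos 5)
Your proposal is correct and structurally matches the paper's proof: same decomposition (bad event, initialization, communication via Lemma~\ref{lem:initcomm} with $\hat p_T\leq\lg T$, zero exploitation regret on $\cG_T$), and the same key inequality $\Delta(\pi)\leq 8M\eps_{p-1}$ for $\pi\in\cC_p$, obtained by exactly the four-term chain (concentration of $\pi^\star$, optimality of $\pi_1$, the $4M\eps_p$ elimination threshold, concentration of $\pi$) that the paper uses to show $\widetilde\Delta_p^{m,k}\leq 8M\eps_{p-1}$. The one place you diverge is the final summation: the paper keeps the sum organized \emph{per edge}, bounds $\sum_{p\leq\widetilde P^{m,k}}\widetilde\Delta_p^{m,k}2^p$ by a geometric series proportional to $\sqrt2^{\widetilde P^{m,k}}$, and then applies Cauchy--Schwarz over the $MK$ edges together with $\sum_{(m,k)}2^{\widetilde P^{m,k}}\leq T$; you instead organize \emph{per epoch} and bound $\sum_p|\cC_p|\sqrt{2^p}$ by a waterfilling/fractional-knapsack extremal argument under the cap $|\cC_p|\leq MK$ and budget $\sum_p|\cC_p|2^p\leq T$. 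Both routes are valid and yield the same $M^{3/2}\sqrt{KT\ln(2M^2KT^2)}$ scaling; the Cauchy--Schwarz version is a one-line inequality, while your waterfilling bound, if you track the boundary epoch carefully, comes out a factor $\sqrt2$ looser in the leading constant than you state (the paper's own displayed computation has the same $\sqrt2$ slack relative to the theorem's stated constant, so this is cosmetic). One point in your favor: you explicitly flag and propose an induction for the fact that all edges of an optimal matching remain in $\cE$ (hence $\pi^\star$ is explored and its estimate concentrates), whereas the paper simply asserts that $\cC_p$ contains a maximum matching; note, though, that your induction step as sketched compares an estimated gap of order $4M\eps_{p-1}$ against the epoch-$p$ threshold $4M\eps_p$, the same index subtlety the paper glosses over, so spelling that step out would require the same care in either write-up.
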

Note that the above regret bound is independent of the suboptimality gaps.

\begin{proof}
The good event and the regret incurred during the initialization phase are the same as in the finite-time analysis given in Section~\ref{sec:ProofMain}. Furthermore, using Lemma~\ref{lem:initcomm} stated therein and since $\hat p_T\leq\lg(T)$, the regret incurred during the communication phases is bounded by 
\[
3M^2 K  \lg(K) \lg \left(T \right) 
	+ M K \lg^2 \left( T \right)+ \frac{4\sqrt{2}-2}{3-2\sqrt{2}} M^3K \lg (K)\sqrt{\ln(2M^2 K T^2)}.
\]

We next bound the exploration regret.
Fix any edge $(m,k)$, and let $\widetilde{P}^{m,k}$ be the last epoch in which this edge is explored. 
If this edge belongs to an optimal matching, i.e., if $\pi^{m,k}$ is optimal, we instead define $\widetilde{P}^{m,k}$ as the last epoch in which the pulled matching $\widetilde{\pi}_p^{m,k}$ associated with $(m,k)$ is suboptimal. In either case, the contribution of the edge $(m,k)$ 
to the exploration regret
can be bounded by $\sum_{p=1}^{\widetilde{P}^{m,k}} \widetilde{\Delta}_p^{m,k} 2^{p}$.


Fix an epoch $p \leq \widetilde{P}^{m,k}$. 
Recall that $\cC_p$ contains at least one actual maximum matching, which we denote by $\pi^\star$.
Also, let $\tilde{\pi}_p^\star$ denote the maximum empirical matching right before the start of epoch $p$.
Since $(m,k)$ is candidate in epoch $p$, we have
%
\begin{eqnarray*}
\tilde{\Delta}_p^{m,k}& =& U^\star - U_{p-1}(\pi^\star) + U_{p-1}(\pi^\star) - U_{p-1}(\tilde{\pi}_p^{m,k}) + U_{p-1}(\tilde{\pi}_p^{m,k}) - U(\tilde{\pi}^{m,k}) \\
& \leq & ( U_\star - U_{p-1}(\pi^\star)) + ( U_{p-1}(\tilde{\pi}_p^\star) - U_{p-1}(\tilde{\pi}_p^{m,k}) + (U_{p-1}(\tilde{\pi}_p^{m,k}) - U(\tilde{\pi}_p^{m,k})) \\
& \leq & 2M\epsilon_{p-1} + 4M\epsilon_{p} + 2M\epsilon_{p-1}
\\& \leq &
8M \epsilon_{p-1} = 8M \sqrt{\frac{\ln(2/\delta)}{2^p}},
\end{eqnarray*}
so, the contribution of the edge $(m,k)$ to the exploration regret can further be bounded by
\begin{align*}
\sum_{p=1}^{\widetilde{P}^{m,k}} \widetilde{\Delta}_p^{m,k} 2^{p}  
\leq 8M \sqrt{\ln(2/\delta)}  \left({{\sum_{p=1}^{\widetilde{P}^{m,k}}} } \sqrt{2}^{p} \right)  
< \frac{8\sqrt{2} M \sqrt{\ln(2/\delta)}}{\sqrt{2}-1} \sqrt{2}^{\widetilde{P}^{m,k}}.
\end{align*}
To bound the total exploration regret, we need to sum this over all edges $(m,k)$.

Note that during each epoch $p=1,2,\dots,\tilde{P}_{m,k}$,
there are exactly $2^p$ exploration rounds associated with the edge $(m,k)$.
Since the total number of rounds is $T$, we find that
\[
\sum_{(m,k)\in[M]\times[K]} \sum_{p=1}^{\tilde{P}_{m,k}} 2^p \leq T,
\]
and in particular,
\[
\sum_{(m,k)\in[M]\times[K]} 2^{\tilde{P}_{m,k}}  \leq T,
\]
hence by the Cauchy-Schwarz inequality,
\[
\sum_{(m,k)\in[M]\times[K]} {\sqrt{2}}^{\tilde{P}_{m,k}}  = \sum_{(m,k)\in[M]\times[K]} \sqrt{{{2}}^{\tilde{P}_{m,k}} }
\leq \sqrt{MKT},
\]
so the total exploration regret can be bounded by
\[
\frac{8\sqrt{2} M \sqrt{\ln(2/\delta)}}{\sqrt{2}-1} 
\sum_{(m,k)\in[M]\times[K]} 
\sqrt{2}^{\widetilde{P}^{m,k}}
\leq
\frac{8\sqrt{2} M \sqrt{\ln(2/\delta)}}{\sqrt{2}-1}  \sqrt{MKT},
\]

completing the proof of Theorem~\ref{thm:minmaxbound}.\end{proof}

\section{Proofs of Auxiliary Lemmas for Theorems~\ref{thm:generalepochsize} and \ref{thm:cyclethroughmatchings}}
\label{app:elimproof}

\subsection{Proof of Lemma~\ref{lem:conc}}\label{sec:conc}
We recall Hoeffding's inequality. 

\begin{proposition}[Hoeffding's inequality \protect{\cite[Theorem~2]{hoeffding}}]
	Let $X_1,\dots,X_n$ be independent random variables taking values in $[0,1]$.
	Then for any $t\geq0$ we have
	\[\pr{\left|\frac{1}{n}\sum X_i - \ex{\frac{1}{n}\sum X_i}\right| > t} < 2\exp(-2nt^2).\]
\end{proposition}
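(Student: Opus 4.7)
The plan is to reduce the two-sided statement to a one-sided tail bound by symmetry and a union bound, then prove the one-sided bound via the Chernoff method. Concretely, writing $S \coloneqq \sum_{i=1}^n(X_i-\bE X_i)$, for any $\lambda>0$ Markov's inequality applied to $\exp(\lambda S)$ gives
\[
\pr{S > nt} \leq e^{-\lambda n t}\,\ex{e^{\lambda S}} = e^{-\lambda n t}\prod_{i=1}^n \ex{e^{\lambda(X_i - \bE X_i)}},
\]
where the factorization uses independence. Applying the identical argument to the variables $-X_i$ (which also have range contained in an interval of length $1$) yields the matching lower-tail bound, and a union bound then produces the factor of $2$ in the statement.

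The crucial ingredient is Hoeffding's moment generating function lemma: if $Y$ is a centered random variable supported on an interval of length $\ell$, then $\ex{e^{\lambda Y}} \leq \exp(\lambda^2\ell^2/8)$ for every $\lambda\in\R$. I would prove this by using convexity of $x\mapsto e^{\lambda x}$ to dominate it pointwise by its linear interpolant between the endpoints of the support interval, taking expectations (and using $\bE Y = 0$ to eliminate the linear term), and then writing the resulting upper bound as $\exp(\phi(u))$ for a suitable auxiliary function $\phi$. A short computation shows $\phi(0)=\phi'(0)=0$ and $\phi''(u)\leq 1/4$ uniformly, so Taylor's theorem with remainder gives $\phi(u)\leq u^2/8$, which is the desired bound.

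Applying this lemma to each centered summand $X_i - \bE X_i\in[-\bE X_i, 1-\bE X_i]$ (an interval of length $1$, since $X_i\in[0,1]$) gives $\prod_i \ex{e^{\lambda(X_i-\bE X_i)}}\leq e^{n\lambda^2/8}$, and plugging this into the Chernoff bound yields
\[
\pr{S > nt}\leq \exp\bigl(-\lambda n t + n\lambda^2/8\bigr).
\]
Minimizing the right-hand side over $\lambda>0$ by setting $\lambda = 4t$ produces the clean exponent $-2nt^2$. The symmetric tail and the union bound then give the stated $2\exp(-2nt^2)$ bound on the absolute deviation, with the strict inequality following either from strictness of Markov's inequality for $t>0$, or (at $t=0$) from the trivial bound $1<2$.

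The main obstacle is the clean derivation of the MGF lemma; in particular, the second-derivative estimate $\phi''\leq 1/4$ is the one place where some care is needed, and it is really an application of the inequality $p(1-p)\leq 1/4$ to a reweighted Bernoulli variance. All remaining steps (Chernoff, independence factorization, optimization of $\lambda$, symmetrization) are entirely mechanical.
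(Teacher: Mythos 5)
Your proof is correct, but note that the paper does not prove this proposition at all: it is imported verbatim as \cite[Theorem~2]{hoeffding} and used as a black box in the proof of Lemma~\ref{lem:conc}. What you have written is the classical Chernoff--Hoeffding argument (exponential Markov bound, factorization by independence, Hoeffding's MGF lemma $\ex{e^{\lambda Y}}\leq e^{\lambda^2\ell^2/8}$ for a centered variable on an interval of length $\ell$, optimization at $\lambda=4t$, and symmetrization for the two-sided bound), and all the steps, including the $\phi''\leq 1/4$ estimate via $p(1-p)\leq 1/4$, are sound. The only point worth a remark is the strict inequality: Hoeffding's original statement gives ``$\leq$'', and the paper's ``$<$'' is a cosmetic strengthening; your justification (triviality at $t=0$, and strictness of the Markov step for $t>0$ because equality would force $S=nt$ almost surely, contradicting $\bE S=0$) handles this adequately, though for the purposes of this paper the weak inequality would serve just as well.
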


Recall the definition of the good event 
\[\cG_T = \Big\{ \textsc{Init}(K,1/KT) \text{ is successful and }  \forall p \leq \hat{p}_T, \forall \pi \in \cC_{p+1}, |\tilde{U}_p(\pi) - U(\pi)|\leq 2M\epsilon_p\Big\}.\]
and recall that
$\eps_p \coloneqq \sqrt{\ln (2/\delta) / 2^{p^c+1} }$ and $\delta = 1/M2KT^2$.
Let $\cH$ be the event that $\textsc{Init}(K,1/KT) $ is successful for all players. Then,
\begin{eqnarray*}
\bP\left(\cG_T^c\right) &\leq & \bP\left(\cH^c\right) + \bP\left(\cH \textnormal{ happens and } \exists p \leq \hat{p}_T,\exists \pi \in \cM \text{ with candidate edges} \textnormal{ such that } |\tilde{U}_p(\pi) - U(\pi)|> 2M\epsilon_p\right) \\
& \leq & \frac{1}{KT} + \bP\left(\cH \textnormal{ happens and } \exists p \leq \lg(T), \exists \pi \in \cM  \text{ with candidate edges such that} |\tilde{U}_p(\pi) - U(\pi)| > 2M\epsilon_p\right),
\end{eqnarray*}
where we have used that $\hat p_T \leq \lg(T)$ deterministically.

Fix an epoch $p$ and a candidate edge $(m,k)$.
We denote by $\widehat\mu_{k}^m(p)$ the estimated mean of arm $k$ for player $m$ at the end of epoch $p$ and by $\widetilde\mu_{k}^m(p)$ the truncated estimated mean sent to the leader by this player at the end of epoch $p$. 

By Hoeffding's inequality and since this estimated mean is based on at least $2^{p^c}$ pulls, we have
\[
\pr{ | \widehat\mu_{k}^m(p) - \mu_{k}^m | > \eps_p} < \delta.
\]
The value $\tilde\mu_{k}^m(p)\in[0,1]$ which is sent to the leader uses the $(p^c+1)/2$ most significant bits. The truncation error is thus at most $2^{-(p^c+1)/2} < \eps_p$, hence we have 
\[
\pr{ | \tilde\mu_{k}^m(p) - \mu_{k}^m | > 2\eps_p} < \delta.
\]
Given the event $\cH$ that the initialization is successful, the quantity $\tilde{U}_p(\pi)$ is a sum of $M$ values $\widetilde\mu_{k}^m(p)$ for $M$ different edges $(m,k)\in[M]\times[K]$. Hence, we have
\begin{align*}
&\bP\left( \cH \textnormal{ happens and } \exists \pi \in \cM \text{ with candidate edges such that } |\tilde{U}_p(\pi) - U(\pi)|> 2M\eps_p | \right)\\ 
&\hspace{1cm}\leq \bP\left(\exists \text{ candidate edge } (m,k) \text{ such that } |\widetilde{\mu}_k^m(p) - \mu_k^m|> 2\eps_p\right) \leq KM\delta.
\end{align*}
Finally, a union bound on $p$ yields 
\begin{eqnarray*}
\bP\left(\cG_T^c\right)
& \leq & \frac{1}{KT} + \lg(T) KM\delta \leq \frac{1}{MT} + \frac{1}{MT},
\end{eqnarray*}
completing the proof of Lemma~\ref{lem:conc}

\subsection{Proof of Lemma~\ref{lem:initcomm}}
	
	For each epoch $p$, the leader first communicates to each player the list of candidate matchings.
	There can be up to $MK$ candidate matchings, and for each of them the leader communicates to the player the arm she has to pull (there is no need to communicate to her the whole matching) which requires $ \lg K$ bits, and there are a total of $M$ players, so this takes at most $M^2 K \lg (K)$ many rounds.\footnote{Strictly speaking, the leader also sends her communication arm and the size of the list she is sending, but there are  at most $MK-M+1$ candidate matchings, as the best one is repeated $M$ times. So, this communication still takes at most $M^2 K \lg K$ many rounds.}
	
	At the end of the epoch, each player sends the leader the empirical estimates for the arms she has pulled, which requires at most $M K(1+ p^c)/2$ many rounds. 
	As players use the best estimated matching as communication arms for the communication phases, a single communication round incurs regret at most $2 + 2M\epsilon_{p-1}$, since the gap between the best estimated matching of the previous phase and the best matching is at most $2M \epsilon_{p-1}$ conditionally to $\cG_T$ (we define $\epsilon_0\coloneqq \sqrt{\frac{\ln(2/\delta)}{2}} \geq \frac{1}{2}$). The first term is for the two players colliding, while the term $2M\epsilon_{p-1}$ is due to the other players who are pulling the best estimated matching instead of the real best one.
	With $\hat{p}_T$ denoting the number of epochs before the (possible) start of the exploitation, the total regret due to  communication phases can be bounded by
	\begin{small}
\begin{align*}
	R_c & \leq  \sum_{p=1}^{\hat p_T}  
	\left(
	2M^2 K  \lg (K)+
	M K (1+p^c)
	\right)(1 + M\epsilon_{p-1}) \\
	& \leq  3M^2K \lg (K) \hat{p}_T + M K (\hat{p}_T)^{c+1} + M^2K \sum_{p=1}^{\hat p_T}  
	\left( 2M \lg (K)+ (1+p^c) \right)
	\epsilon_{p-1}.
	\end{align*}
	\end{small}

We now bound the sum as:
\begin{small}
\begin{align*}
\sum_{p=1}^{\hat p_T}  
	\left(2 M \lg (K)+ (1+p^c) \right)
	\epsilon_{p-1} & = 2M \lg (K)\sqrt{\ln(2/\delta)} \sum_{p=0}^{\hat p_T-1}  \frac{1}{\sqrt{2}^{1+p^c}} + \sqrt{\ln(2/\delta)} \sum_{p=0}^{\hat p_T-1}  \frac{1+(p+1)^c}{\sqrt{2}^{1+p^c}}\\
	& \leq 2M \lg (K)\sqrt{\ln(2/\delta)}\sum_{n=1}^{\infty}  \frac{1}{\sqrt{2}^{n}} + \sqrt{\ln(2/\delta)} \sum_{n=1}^{\infty}  \frac{n2^c}{\sqrt{2}^{n}} \\
	& \leq 2M \lg (K)\sqrt{\ln(2/\delta)} \frac{1}{\sqrt{2} -1} + \sqrt{\ln(2/\delta)}   \frac{2^c\sqrt{2}}{(\sqrt{2}-1)^2},
\end{align*}\end{small}completing the proof of Lemma~\ref{lem:initcomm}.

\subsection{Proof of Lemma~\ref{lem:Tassumption}}\label{proof:Tassumption}
The assumption $T\geq \exp(2^{\frac{c^c}{\ln^c(1+\frac{1}{2c})}})$ gives $\lg(\ln T)^{1/c} \geq \frac{c}{\ln(1+1/2c)}$.
In particular, $(\lg T)^{1/c} \geq {c}$.
	We will also use the inequality
	\begin{equation}\label{cx}
	(x+1)^c \leq e^{c/x} x^c,
	\end{equation}
	which holds for all positive $x$, since
	$(x+1)^c/x^c=(1+1/x)^c\leq \exp(1/x)^c = \exp(c/x)$.

Using a crude upper bound on the number of epochs that can fit within $T$ rounds, we get  $\hat{p}_T \leq 1 + (\lg T)^{1/c}$. As $(\lg T)^{1/c} \geq {c} \geq 1$ we have $\hat p_T \leq 2 (\lg T)^{1/c}$. Also~\eqref{cx} gives
$(\hat p_T)^{c}
\leq e \lg T$.

Also, $2\lg(\ln(T)) \geq 2c^c \geq 2^{c}$. It remains to show the first inequality of Lemma~\ref{lem:Tassumption}.

Straightforward calculations using the definition of $\eps_p$ in \eqref{def:deltaeps2new} give
\[
P(\pi) \leq 
1 + 
L(\pi)^{1/c},
\textnormal{ where } 
L(\pi)\coloneqq \lg \left( \frac{32 M^2 \ln (2M^2KT^2)}{\Delta(\pi)^2} \right).\]
We claim that we have
\begin{equation}\label{Ppi}
P(\pi)^c \leq \left(1+\frac{1}{2c}\right) L(\pi).
\end{equation}

Indeed, since $\Delta(\pi)\leq M$,  we have $L(\pi)^{1/c}> (\lg \ln T)^{1/c} \geq \frac{c}{\ln(1+1/2c)}$ and so \eqref{cx} with $x=L(\pi)^{1/c}$ gives~\eqref{Ppi}. 
Hence,
\begin{equation} \label{DeltaPpi}
\Delta(\pi) 2^{P(\pi)^c}
\leq \Delta(\pi) \left( \frac{32 M^2 \ln (2M^2 KT^2) }{\Delta(\pi)^2} \right)^{1+1/2c}
\leq \left(\frac{32 M^2 \ln(2M^2KT^2)}{\Delta(\pi)} \right)^{1+1/c},
\end{equation}
completing the proof of Lemma~\ref{lem:Tassumption}.

\subsection{Proof of Lemma~\ref{lemma:exploelimm}} \label{proof:exploelimm}
For brevity we define, for this proof, $\Delta \coloneqq \Delta(\pi^{m,k})$, $P \coloneqq P(\pi^{m,k})$ and $\Delta_p \coloneqq \widetilde{\Delta}_p^{m,k}$. First, $\Delta > 8 M \epsilon_P$ by definition of $P$. Also, $\Delta_p \leq 8 M \epsilon_{p-1}$ for any $p \leq P-1$, otherwise the edge $(m,k)$ would have been eliminated before epoch $p$. It then holds 
\begin{equation}
\label{eq:deltap}
\Delta_p \leq \frac{\epsilon_{p-1}}{\epsilon_P} \Delta = \sqrt{2}^{P^c - (p-1)^c} \Delta.
\end{equation}

It comes from the convexity of $x \mapsto x^c$ that $(p+1)^c + (p-1)^c - 2p^c \geq 0$, and thus 
$$P^c + (p-1)^c - 2p^c  \geq P^c - (p+1)^c \geq P - (p+1).$$ It then follows \[p^c + \frac{P^c - (p-1)^c}{2} \leq P^c + \frac{p+1 - P}{2}.\] Plugging this in \eqref{eq:deltap} gives
\[
2^{p^c} \Delta_p \leq \frac{2^{P^c}}{\sqrt{2}^{P-(p+1)}} \Delta,
\]
completing the proof of Lemma~\ref{lemma:exploelimm}.
\end{document}